\newtheorem{theorem}{Theorem}[section]
\newtheorem{lemma}[theorem]{Lemma}
\newtheorem{corollary}[theorem]{Corollary}
\title{Ordinal Risk-Group Classification}
\author{Yizhar Toren \\ Tel Aviv University, Tel Aviv, Israel \textsf{yizhar.toren@math.tau.ac.il} } %
\begin{document}

%
%
%
%
%
%
%
%
%
%

\maketitle

%

\begin{abstract}
Most classification methods provide either a prediction of class membership or an assessment of class membership probability. In the case of two-group classification the predicted probability can be described as "risk" of belonging to a ``special" class . When the required output is a set of ordinal-risk groups, a discretization of the continuous risk prediction is achieved by two common methods: by constructing a set of models that describe the conditional risk function at specific points (quantile regression) or by dividing the output of an "optimal" classification model into adjacent intervals that correspond to the desired risk groups. By defining a new error measure for the distribution of risk onto intervals we are able to identify lower bounds on the accuracy of these methods, showing sub-optimality both in their distribution of risk and in the efficiency of their resulting partition into intervals. By adding a new form of constraint to the existing maximum likelihood optimization framework and by introducing a penalty function to avoid degenerate solutions, we show how existing methods can be augmented to solve the ordinal risk-group classification problem. We implement our method for generalized linear models (GLM) and show a numeric example using Gaussian logistic regression as a reference.
\end{abstract}

\section{Introduction \label{section_intro}}

\par The classical problem of discriminating between two classes of observations based on a given dataset has been widely discussed in the statistical literature. When only two classes are involved, the question of discrimination is reduced to whether or not a given observation is a member of a ''special" class (where the other class is the default state, for example sick vs. healthy). Some classification methods, such as Fisher's linear discriminant analysis (LDA), make a decisive prediction of class membership while minimizing error in some sense, typically the misclassification rate. Other methods, such as logistic regression, provide an estimate of the exact conditional probability of belonging to the ''special" class given a set of predictor variables. Throughout this paper we shall refer to this conditional probability as ''\emph{conditional risk}" or simply ''\emph{risk}", although sometimes belonging to the special class might actually have a very positive context (e.g. success).

\par There are two ways to estimate the conditional risk function: parametric and non-parametric. Parametric methods primarily include logit/probit models (Martin 1977 \cite{martin_1977}, Ohlsen 1980 \cite{ohlson_1980}) and linear models (Amemiya 1981 \cite{amemiya_1981}, Maddala 1986 \cite{maddala_1986} and Amemiya 1985 \cite{amemiya_1985}). Powell (1994 \cite{powell_1994}) has a review of non-parametric estimators. For a comparison of these approaches and complete review see Elliott and Lieli (2006) \cite{elliott_2006} and more recently Green and Hensher (2010) \cite{greene_2010}.

\par The estimation of the exact structure of the conditional risk function comes in handy when we wish to make distinctions between observations that are finer than simply class membership. However, in realistic scenarios acting upon such estimations alone may prove to be difficult. Assessments on a scale of 1:100 (as percentages) or finer assessments have little practical use, primarily since the possible actions resulting from such information are usually few. For such cases an ordinal output is required. It is important to note that this problem is not equivalent to multi-group classification in two ways: first, our groups are ordinal by nature and relate to the underlying risk; second, the assignment into groups is not given a-priori and greatly depends on the selection of model, model parameters and the borders of the intervals assigned to each risk group.

\par There are two common approaches to creating an ordered set of risk groups to match a finite set of escalating actions. The first approach is to create multiple models describing the behaviour of the conditional risk function at specific points (also known as "quantile regression"); the second approach is to divide post-hoc the continuous risk estimation of a known model into intervals.

\par The first approach attempts to construct separate models that describe the behaviour of the conditional risk function at specific levels of risk. In linear models this approach is known as \emph{quantile regression} (Koenker \& Bassett 1978 \cite{Quantile_regression_1978}). Manski (\cite{Manski_1975}, \cite{Manski_1985}, \cite{Manski_1986}) implemented this notion to binary response models (the equivalent of two-group classification) naming it "Maximum Score Estimation". In a series of papers he shows the existence, uniqueness and optimal properties of the estimators and follows by showing their stable asymptotic properties. The primary justification for using this approach is methodological: it demands that we specify in advance the levels of risk that are of interest to us (a vector $q$ of quantiles), and then constructs a series of models that describe conditional risk at these quantiles. However, as we shall demonstrate in section \ref{section_cond_precentiles}, using risk-quantiles (or conditional probability over left-unbounded and overlapping intervals) is not relevant to our definition of the problem and even the term "conditional quantiles" is in itself misleading.

\par In the second, more ``practical" approach, the continuous output of an existing optimal risk model (logit, linear or non-parametric) is divided into intervals, thus translating the prediction of risk (usually continuous in nature) into large "bins of risk" - i.e "low"/ "medium"/ "high" or "mild"/ "moderate"/ "severe" (depending on context). The final result of this discretization process is a set of ordinal risk groups based on the continuous prediction of conditional risk. The primary drawback of this approach is that the selection of the classification model and its parameters is not performed in light of the final set of desired risk groups. Instead, an "optimal model" (in some sense) is constructed first, and the partition into discrete groups is performed post-hoc.

\par The primary objective of this paper is to combine the idea of pre-set levels of risk over adjacent intervals (rather than risk quantiles) into a standard classification framework. Instead of constructing multiple models, we offer a process that optimizes a single risk estimation model (or "score") paired with a matching set of breakpoints that partition the model's output into ordinal risk groups. To that end we define a new measure of accuracy - \emph{Interval Risk Deviation} (IRD) - which describes a model's ability to distribute risk correctly into intervals given a pre-set vector $r$ of risk levels. We show how this new measure of error can be integrated into existing classification frameworks (specifically the maximum likelihood framework) by adding a constraint to the existing optimization problem. In addition, we address the more practical problem of effectively selecting breakpoints by introducing a penalty function to the modified optimization scheme.

\par The remainder of this paper is organized as follows. Section \ref{section_definitions} defines risk groups and a measure of error (IRD) that will be necessary for optimality. Section \ref{section_existing_methods} demonstrates the problems of using existing approaches.
Section \ref{section_ORGC} formulates a new optimization problem that will provide accurate, optimal and non-degenerate solutions, and section \ref{section_case_study} provides a case study where the new framework is applied to logistic regression and presents an example.

\section{Definitions \label{section_definitions}}

\par Let $r \in [0,1]^T$ be an ordered vector of \emph{risk levels} ($0 \leq r_{1} < r_{2} < \ldots < r_{T} \leq 1$), let $X = (X_1, \ldots, X_P)$ be a continuous $P$-dimensional random vector and let $Y \in \{0,1\}$ be a Bernoulli random variable representing class membership.  An \emph{Ordinal Risk-Group Score} (ORGS) for a pre-set risk vector $r$ is a couplet $(\Psi, \tau)$ where $\Psi: \mathbb{R}^P \rightarrow \mathbb{R}$ is a continuous (possibly not normalized) risk predictor, which summarizes the attributes of $X$ into a single number (a score), and $\tau \in \mathbb{R}^{T-1}$ is a complete partition of $\mathbb{R}$ into $T$ distinct and adjacent intervals ($- \infty  = \tau_0 < \tau_1 < \tau_2 < \ldots < \tau_{T-1} < \tau_T = \infty$). The couplet $(\Psi, \tau)$ classifies observations into risk groups by the following equivalence: An observed vector $X$ belongs to the $i$'th risk group if and only if $\Psi(X) \in (\tau_{i-1}, \tau_{i}]$ (the intervals are right-side open to avoid ambiguities). The actual conditional risk level of the $i$'th risk group defined by a couplet $(\Psi, \tau)$ is:
\begin{equation} \label{eq_R_def}
R_{i} (\Psi, \tau) = P ( Y=1 \mid \Psi(X) \in (\tau_{i-1}, \tau_{i}])
\end{equation}

\par It is worth noting that score-based classification methods for two classes can be described as a special of $T=2$ (two risk groups). Such methods look for a single breakpoint $\tau \in \mathbb{R}$, and the two resulting intervals $(-\infty, \tau], (\tau, \infty)$ become an absolute prediction of class membership: $\Psi(X)> \tau \Rightarrow$ $X$ belongs to class $1$. Other methods, designed to deal with more than one risk group, typically assign a single breakpoint to each risk group (see section \ref{section_cond_precentiles}), reflecting the idea that the assignment to risk group is based on \emph{thresholds}: an observed $X$ is assigned to the $i$'th group if and only if $\Psi(X)$ crosses the ($i-1$)'th threshold ($\Psi(X) > \tau_{i-1}$) but does not cross the $i$'th threshold ($\Psi(X) \leq \tau_{i}$). 

\par Even from the latter definition, it becomes evident that the assignment to groups is in fact based on \emph{adjacent intervals} $\lbrace (\tau_{i-1}, \tau_{i}] \rbrace_{i}^{T-1}$ (rather than on right-side open ended intervals defined by thresholds) ans that any breakpoint we set affects the definition of two intervals (and hence two risk groups). Although further on in this paper we shall discuss separate breakpoints in relation to risk groups in order to demonstrate the key problem that arises from the use of adjacent intervals (section \ref{section_lower_bounds_IRD}), the notion of assigning intervals \emph{simultaneously} rather than separate breakpoints should remain clear throughout this paper.

\par We can now describe the accuracy of an ordinal risk score $(\Psi, \tau)$ in relation to a pre-set vector $r$ as the overall difference between the pre-defined risk levels of $r$ and the actual conditional risk levels $R(\Psi,\tau)$. We define an error measure for risk-group classification models which is a parallel of \emph{misclassification rate} in standard classification methods. We name this measure \emph{Interval Risk Deviation} (IRD):
\begin{equation} \label{eq_IRD_def}
\text{IRD}_{r}(\Psi, \tau) = \Vert R(\Psi, \tau) - r \Vert
\end{equation}

\par On it's own, the very definition of IRD marks a new approach to the evaluation of ordinal risk scores. Having a predefined set of risk levels means that any risk score $(\Psi, \tau)$ we consider as a candidate must uphold $\text{IRD}_{r}(\Psi, \tau) = 0$ (or at the very least $\text{IRD}_{r}(\Psi, \tau) < \varepsilon$ for a predefined small $\varepsilon >0$). This makes $\text{IRD} = 0$ a \emph{necessary condition} for optimality. In the next two sections we demonstrate how the two existing approaches for creating ordinal risk scores do not necessarily fulfil this condition, either because of unsuitable definitions of optimality, as is the case with risk-quantile based methods, or by ignoring it altogether, as is the case with the 2-step approach.

\section{Problems with Existing Scoring Methods \label{section_existing_methods}}

\subsection{Risk-Quantiles (and why we can't use them) \label{section_cond_precentiles}}

\par When first presented with the problem of selecting an optimal model paired with a set of optimal breakpoints, our initial idea was to use quantile-oriented models. Such models have been extensively studied in econometrics, where they are commonly referred to as ``ordered choice models" (\cite{train_2003}, \cite{greene_2010}). The most relevant model in that group is Manski's \emph{maximum score estimation} which defines the optimization problem using a set of probabilities over \emph{left-unbounded overlapping} intervals (or \emph{rays}) in contrast to the definition of the problem over \emph{adjacent, non-overlapping} intervals.

\par In order to better illustrate the differences between our definitions and Manski's quantile-oriented approach we must first describe quantile oriented models in our terms. First we replace the vector $r$ with a vector $q$ of "conditional quantiles", which are in fact the desired conditional probabilities over left-unbounded and overlapping intervals. Using Manski's adaptation of quantile regression \cite{Manski_1975} we can build a different set of model parameters for each quantile $q_{i}$ optimizing:
\begin{equation}
\vert P(Y=1 \mid \Psi_{i}(X) \leq 0) - q_{i} \vert \longrightarrow \min_{\Psi_{i}}
\end{equation}

\par It is easy to see how this approach can be slightly modified to match the original objective of finding a single model: by coercing the models $\Psi_{i}$ to be parallel we can create a "master model" $\Psi(X)$ and derive appropriate thresholds $\lbrace \tau_{i} \rbrace_{i=1}^{T-1}$ such that:
\begin{equation*}
\Psi_{i}(X) \leq 0 \quad \Leftrightarrow \quad \Psi(X) \leq \tau_{i}
\end{equation*}
\begin{equation} \label{optim_cond_quant}
\vert P(Y=1 \mid \Psi(X) \leq \tau_{i}) - q_{i} \vert \longrightarrow \min_{\Psi} \quad i \in \lbrace 1, \dots T \rbrace
\end{equation}

\par Using (\ref{optim_cond_quant}) we can easily define $Q_{i} (\Psi,\tau)$ = $P(Y = 1 \mid \Psi(X) \leq \tau_{i})$ and the equivalent \emph{Quantile Risk Deviation} $QRD_{q}(\Psi, \tau) = \Vert Q(\Psi,\tau) -q \Vert$, and look for a model with $QRD = 0$. However, while it is tempting to describe the vector $q$ as a vector of  ``\emph{conditional quantiles}", the term is in itself misleading and should be avoided. Figure \ref{figure_hetero_cond_graphs} demonstrates how even under relatively simple assumptions (a one dimensional Gaussian distribution with unequal conditional variances) the function $Q_{i} (\Psi,\tau) = P(Y = 1 \mid X \leq \tau_{i})$ is not even monotone in $\tau_{i}$.
\begin{figure}[h]
\center
\includegraphics[scale=0.9, angle = 270]{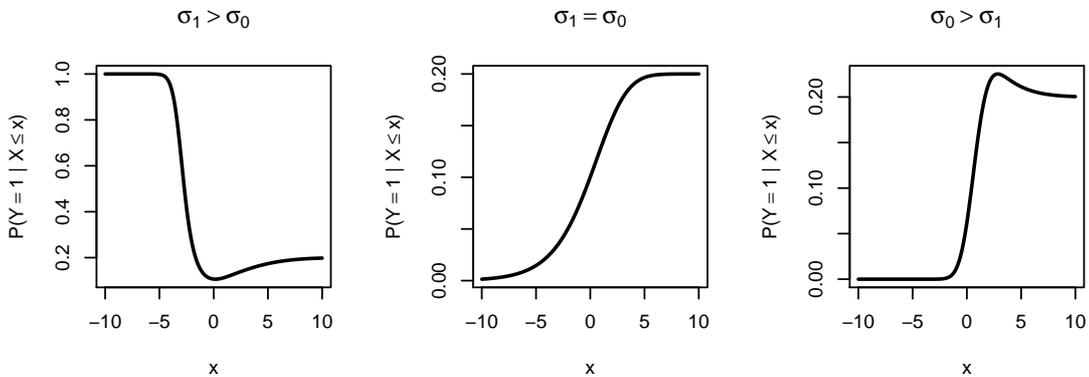}
\caption[Conditional probability over left-unbounded intervals]{\label{figure_hetero_cond_graphs} Different behaviour of conditional probability over left-unbounded intervals as a function of the threshold $x$ in the case of one-dimensional Gaussian distribution with $\mu_{0}= -1$, $\mu_{1} = 1$ and $P(Y=1) = 0.2$. In the left panel $\sigma_1 =4, \sigma_0 = 1$, in the middle panel $\sigma_1 = \sigma_0 = 2$ (homoscedastic case) and in the right panel $\sigma_1 = 1, \sigma_0 = 4$.}
\end{figure}
\par Even if we assume strict monotonicity of $P(Y=1 \mid \Psi(X) \leq x)$, for example by assuming the strict monotone likelihood ratio property (SMLRP, for details see Appendix \ref{appendix_MLRP_cdf_ratio_mono}) and thus giving the term ``conditional quantiles" a meaningful sense, it would still be impossible to apply this approach to optimizing the distribution of risk over adjacent intervals. In order to use ``risk-quantiles" to solve our problem we must first find an a-priori mechanism that will translate any given vector of desired conditional probabilities over adjacent intervals $r$ to the equivalent vector of desired conditional probabilities over left unbounded and overlapping intervals $q$. 

\par However it is easy to show that such an a-priori translation is impossible. Using the \emph{law of total probability} in its conditional form we can calculate for any given $R$ the equivalent $Q^{(R)}$ (actual probabilities over left unbounded intervals):
\begin{equation} \begin{split} \label{eq_R_Q}
Q^{(R)}_{i} & (\Psi,\tau) = P(Y = 1 \mid \Psi(X) \leq \tau_{i})\\
= & \sum_{j \leq i} P(Y=1 \mid \Psi(x) \in (\tau_{j-1}, \tau_{j}], \Psi(X) \leq \tau_{i}) P(\Psi(X) \in (\tau_{j-1}, \tau_{j}] \mid \Psi(X) \leq \tau_{i})\\
= & \sum_{j \leq i} P(Y=1 \mid \Psi(x) \in (\tau_{j-1}, \tau_{j}]) \frac{P(\Psi(X) \in (\tau_{j-1}, \tau_{j}] ,\Psi(X) \leq \tau_{i})}{P( \Psi(X) \leq \tau_{i})} \\
= & \frac{1}{P(\Psi(X) \leq \tau_{i})} \sum_{j \leq i} R_{j}(\Psi,\tau) P(\Psi(X) \in (\tau_{j-1}, \tau_{j}])
\end{split} \end{equation}
Or equivalently:
\begin{equation}
R_{i} (\Psi,\tau) = \frac{P(\Psi(X) \leq \tau_{i})}{P(\Psi(X) \in (\tau_{i-1}, \tau_{i}])} Q^{(R)}_{i} (\Psi,\tau) - \frac{P(\Psi(X) \leq \tau_{i-1})}{P(\Psi(X) \in (\tau_{i-1}, \tau_{i}])} Q^{(R)}_{i-1}(\Psi,\tau)
\end{equation}
The same process can be applied to the corresponding vector of risk quantiles $q^{(r)}$:
\begin{equation}  \begin{split} \label{eq_r_q}
q^{(r)}_{i}  = & \frac{\sum_{j < i} \; r_{j} \; P(\Psi(X) \in (\tau_{i-1}, \tau_{i}])}{P(\Psi(X) \leq \tau_{i})} \\
 r_{i} = & \frac{P(\Psi(X) \leq \tau_{i})}{P(\Psi(X) \in (\tau_{i-1}, \tau_{i}])} q^{(r)}_{i} - \frac{P(\Psi(X) \leq \tau_{i-1})}{P(\Psi(X) \in (\tau_{i-1}, \tau_{i}])} q^{(r)}_{i-1}
\end{split} \end{equation}
As a result for a fixed $(\Psi, \tau)$ we have:
\begin{equation*}
R_{i} (\Psi,\tau) = r_i \Leftrightarrow Q^{(R)}(\Psi,\tau) = q^{(r)}_{i}
\end{equation*}
\begin{equation}
\text{IRD}_{r}(\Psi,\tau) = 0 \Leftrightarrow QRD_{q^{(r)}}(\Psi,\tau) = 0
\end{equation}

\par The primary problem of using quantiles to define this problem stems from the relation between $r$ and the resulting $q_{r}$. By our own definitions the central aspect of the problem is the probability over adjacent intervals and not overlapping left-unbounded intervals. Therefore the optimization must be performed against a fixed, pre-defined vector $r$. If we wish to construct an analogous quantile-based optimization problem, we must first find the equivalent vector $q_r$ which defines quantile-based problem. However equation (\ref{eq_r_q}) shows that since the relation between $r$ and $q_r$ depends on the specific form of the optimal model $\Psi$, in order to construct $q_r$ we must first find the optimal model $\Psi$ for this problem (which is what we are looking for in the first place), or in other words the translation $r \leftrightarrow q$ is possible only once we have the optimal solution to the problem. Therefore building an analogous optimization problem over left-unbounded overlapping intervals can only be done \emph{after} we have the optimal solution. Consequently we cannot use quantile-based models to construct an optimal model for the adjacent interval-based ordinal risk-group problem.

\subsection{Lower bounds on Interval Risk Deviation \label{section_lower_bounds_IRD}}

\par Another common practice when building scores for risk groups is to build a model $\Psi$ that is optimal in some sense (e.g. maximizing likelihood or minimizing overall miss-classification rate) and then partition the range of $\Psi(X)$ into adjacent intervals the define risk groups. In this section we demonstrate how, under relatively simple assumptions, using this approach with existing classification models is not optimal for more than two risk groups. 

\par Using Bayes theorem we can represent $R$ as:
\begin{equation*}
R_{i} (\Psi ,\tau) = P(Y=1 \mid \Psi(X) \in (\tau_{i-1}, \tau_{i}]) = P(Y=1)\frac{P(\Psi(X) \in (\tau_{i-1}, \tau_{i}] \mid Y=1)}{P(\Psi(X) \in (\tau_{i-1}, \tau_{i}])}
\end{equation*}

\par We assume that $(X,Y,\Psi)$ satisfies the Strict Monotone Likelihood Ratio Property (SMLRP, see appendix \ref{appendix_MLRP_cdf_ratio_mono} for exact definition and details) and that the marginal densities $f_{X \mid Y=k}$ ($k = 0,1$) are continuous, strictly positive and finite. By continuity and finiteness we can describe the behaviour of $R_{i} (\Psi,\tau)$ for infinitely short intervals ($\tau_{i} \rightarrow \tau_{i-1}$):
\begin{equation} \label{eq_lim_of_R} \begin{split}
& \lim_{\tau_{i} \rightarrow \tau_{i-1}} R_{i} (\Psi,\tau) =  \lim_{\tau_{i} \rightarrow \tau_{i-1}} \frac{P(Y=1) \; P(\Psi(X) \in (\tau_{i-1}, \tau_{i}] \mid Y = 1)}{P(\Psi(X) \in (\tau_{i-1}, \tau_{i}])} = \\
& = P(Y=1)  \frac{\lim_{\tau_{i} \rightarrow \tau_{i-1}}  \frac{P(\Psi(X) \in (\tau_{i},\tau_{i-1}] \mid Y = 1)}{\tau_{i} - \tau_{i-1}}}{\lim_{\tau_{i} \rightarrow \tau_{i-1}}\frac{P(\Psi(X) \in (\tau_{i},\tau_{i-1}])}{\tau_{i} - \tau_{i-1}}} = P(Y=1) \frac{f_{\Psi(X) | Y=1} (\tau_{i-1})}{f_{\Psi(X)} (\tau_{i-1})}
\end{split} \end{equation}
where $f$ is the appropriate density function and the limit is from the right-hand side. Similarly for any $z \in (\tau_{i-1}, \tau_{i}]$,
\begin{equation}  \label{eq_lim_of_R_general}
\lim_{\tau_{i} \rightarrow z} \lim_{\tau_{i-1} \rightarrow z} R_{i} (\Psi,\tau) = \lim_{\tau_{i-1} \rightarrow z} \lim_{\tau_{i} \rightarrow z} R_{i} (\Psi,\tau) = P(Y=1) \frac{f_{\Psi(X) | Y=1} (z)}{f_{\Psi(X)} (z)}
\end{equation}

\par Although we have stressed the importance of simultaneity when assigning intervals to risk groups, in order to understand the implications of (\ref{eq_lim_of_R_general}) on optimal model selection we must look at the problem from a different perspective. First we fix $\Psi$ and assume that a given  partition $\tau$ supports a perfect distribution of conditional risk up to the $(i-1)$'th group, meaning that $R_{j}(\Psi, \tau) = r_{j}$ for all $j < i$. Under these conditions, combined with our previous assumptions of continuous, strictly positive conditional densities and SMLRP, we can explicitly show that not all values of $r_{i}$ are exactly achievable without introducing some IRD: by theorem \ref{th_SMLRP_mono_R_equiv} $R_{i}(\Psi, \tau)$ is strictly increasing in $\tau_{i}$ and therefore we can explicitly define a feasibility criterion:
\begin{equation} \label{eq_lower_bound_on_r}
P(Y=1) \frac{f_{\Psi(X) | Y=1} (\tau_{i-1})}{f_{\Psi(X)} (\tau_{i-1})} < r_{i}
\end{equation}
\par Using continuity (which enables us to divide by $P(Y=1)f_{\Psi(X) | Y=1} (\tau_{i-1})$) we can transform (\ref{eq_lower_bound_on_r}) into a condition on the likelihood ratio $\Lambda$:
\begin{equation} \label{eq_lower_bound_on_LR}
\Lambda_{\Psi}(\tau_{i-1}) = \frac{f_{\Psi(X) \mid Y=1} (\tau_{i-1})}{f_{\Psi(X) \mid Y=0} (\tau_{i-1})} < \frac{1 - P(Y=1)}{P(Y=1)} \; \frac{r_{i}}{1-r_{i}}
\end{equation}
If $\tau$ does not meet the feasibility criterion (\ref{eq_lower_bound_on_r}), then by (\ref{eq_lim_of_R}) and strict monotonicity of $R$ any selection of $\tau_{i} > \tau_{i-1}$ will have $R_{i} (\Psi,\tau) > r_{i}$ even if we set the interval $(\tau_{i-1}, \tau_{i}]$ to be arbitrarily small. The inevitable result that, for the our fixed model $\Psi$, \emph{any} choice of $\tau$ will have $\text{IRD}_{r}(\Psi, \tau) > 0$.

\par It is important to note that the set of $T-1$ inequalities defined by (\ref{eq_lower_bound_on_r}), (\ref{eq_lower_bound_on_LR}) are necessary yet not sufficient conditions for IRD=0. Assume that we have a solution $(\Psi, \tau)$ which satisfies $\text{IRD}_{r}(\Psi, \tau) = 0$. Under SMLRP we have $x_{2} > x_{1} \: \Rightarrow \: \Lambda_{\Psi}(x_{2}) > \Lambda_{\Psi}(x_{1})$. Our counter example $(\Psi, \tilde{\tau})$ satisfies $\tilde{\tau_{1}} < \tau_{1}$ and $\forall i > 1 : \: \tilde{\tau_{i}} = \tau_{i}$ . By SMLRP we have:
\begin{equation*} \begin{split}
& P(Y=1) \frac{f_{\Psi(X) | Y=1} (\tilde{\tau_{1}})}{f_{\Psi(X)} (\tilde{\tau_{1}})} = \left(1 + \frac{1-p}{p} \frac{1}{\Lambda_{\Psi}(\tilde{\tau_{1}})} \right)^{-1} < \\
< & \left(1 + \frac{1-p}{p} \frac{1}{\Lambda_{\Psi}(\tau_{1})} \right)^{-1} = P(Y=1) \frac{f_{\Psi(X) | Y=1} (\tau_{1})}{f_{\Psi(X)} (\tau_{1})} < r_{2}
\end{split} \end{equation*}
Therefore (\ref{eq_lower_bound_on_r}) is maintained (the other inequalities are not affected). On the other hand by theorem \ref{th_SMLRP_mono_R_equiv} we have strict monotonicity of $R$, meaning: 
\begin{equation*}
R_{1}(\Psi,\tilde{\tau}) = P(Y=1 \mid \Psi(X) < \tilde{\tau_{1}}) < P(Y=1 \mid \Psi(X) < \tau_{1}) = R_{1} (\Psi,\tau) = r_{1}
\end{equation*}
and therefore $\text{IRD}_{r}(\Psi, \tilde{\tau}) > 0$. The conclusion is that even under SMLRP we can use (\ref{eq_lower_bound_on_r}),(\ref{eq_lower_bound_on_LR}) only as necessary conditions for the feasibility of a given solution and that the test of feasibility must be performed using (\ref{eq_R_def}) and (\ref{eq_IRD_def}) directly. 

\par In order to satisfy the necessary conditions for IRD=0 in the absence of SMLRP we can generally require $r_{i} > \underset{\lbrace \tau_{i}: \tau_{i} > \tau_{i-1} \rbrace}{\inf} R_{i} (\Psi,\tau)$ (we require strong inequalities to avoid degenerate zero-length intervals), however for such cases the existence of a closed-form expression would depend on the exact distribution of $X | Y=k$ ($k = 0,1$). We leave the exact formulation of non-SMLRP lower bounds outside the scope of this paper.

\par The final conclusion is that given two sets of risk categories $r_{1}, r_{2}$ and a couplet $(\Psi, \tau_{1})$ which satisfies $\text{IRD}_{r_{1}}(\Psi,\tau_{1}) = 0$, we may not be able to find a set of breakpoints $\tau_{2}$ which satisfies $\text{IRD}_{r_{2}}(\Psi,\tau_{2}) = 0$ (using the same model $\Psi$). Specifically we can now claim that optimal models of existing classification methods (typically optimized for $r=(0,1)$) are not necessarily feasible for any choice of $r$. 

\par The existence of lower bounds on the IRD is perhaps the most counter-intuitive result of this paper. The reason why these limitations have not been addressed before has to do with the fact that most classification methods use a single breakpoint to distinguish between the two groups ($\tau \in \mathbb{R}$) and the issue of degenerate solutions or non-feasibility of $\Psi$ is avoided altogether. Although the fulfilment of (\ref{eq_lower_bound_on_r}),(\ref{eq_lower_bound_on_LR}) does not ensure the feasibility of a given solution, these inequalities are instrumental in demonstrating why the solutions from existing methods may not be feasible for a different choice of $r$, and provide an elegant method to disqualify such solutions. Once we define our objective as the distribution pre-set risk levels over multiple adjacent intervals we must recognize the existence of possible limitations on IRD for existing methods and as a result define new conditions for optimality.

\section{Ordinal Risk-Group Classification \label{section_ORGC}}

\par Although the definition of IRD naturally suggests itself as a new criterion for optimality (look for a couplet $(\Psi, \tau)$ such that $\text{IRD}_{r}(\Psi, \tau(\Psi)) = 0$), there are two problems with using IRD as a single optimality criterion. First, since our problem is a classification problem we must consider some sense of the quality of separation between the two classes in order to avoid degenerate solutions. This principle is not straight forwardly reflected by the definition of IRD (\ref{eq_IRD_def}). Second, our definition of IRD and the resulting necessary inequalities (\ref{eq_lower_bound_on_r}) do not ensure existence or uniqueness of an optimal solution. 

\par Our practical solution to these problems is to define IRD as a feasibility criterion and use it as a constraint in an existing optimization problem. Since we are still in the domain of classification problems it would be reasonable to preserve some basic concepts, particularly the definition of optimality: We seek a model that on the one hand maximizes our ability to discriminate between the two classes, but on the other hand distributes risk correctly, meaning that it belongs to the set of feasible solutions:
\begin{equation}
C_{r}(0) = \lbrace (\Psi,\tau) : \: \text{IRD}_{r}(\Psi, \tau) = 0 \rbrace
\end{equation}
In the event that $C$ is an empty set we would have to reconsider our pre-set $r$ or change our method of constructing $\Psi$. 

\par Any classification method we might consider for IRD ``augmentation" must satisfy several criteria. First, it must provide a continuous output $\Psi(X)$, ensuring that we have an appropriate output that can be partitioned into intervals (using $\tau$). This requirement automatically excludes classification methods that do not combine the vector of explanatory variables $X$ into a single real-valued score $\Psi(X)$ before making a prediction of risk or class membership (classification trees are an example of such excluded methods). Furthermore, we would like to maintain the notion that observations with higher scores have a higher conditional risk, and therefore require that the output $\Psi(X)$ is strongly correlated with the conditional risk function $P(Y=1 \mid \Psi(X)=x)$. Methods such as Fisher's LDA \cite{fisher_LDA} or SVM for two classes do provide a continuous scale and a single breakpoint to predict class membership, however these scales are not necessarily correlated with the conditional risk and only ensure that a majority of the observations from the special class are on one side of the breakpoint. We therefore decided to focus our discussion on risk estimation methods that provide a direct estimation of the risk function:
\begin{equation}
\Psi: \mathbb{R}^{P} \longrightarrow [0,1] \:, \quad 
\Psi(X) = P(Y=1\mid X)
\end{equation}

\par Finally, in order to simplify our construction we assume SMLRP (see appendix \ref{appendix_MLRP_cdf_ratio_mono}). As we have seen before, this assumption ensures that we have a simple way to calculate the lower bounds on IRD, and also ensures that for a given model $\Psi$, if exists $\tau(\Psi)$ such that $\text{IRD}_{r}(\Psi, \tau(\Psi)) = 0$ then it is unique (see lemma \ref{lemma_tau_unique}). These properties enable us to simplify our parameter space by optimizing over $\Psi$ alone, and provide a simple way to test for the existence of necessary conditions for $\text{IRD}_{r}(\Psi) = \text{IRD}_{r}(\Psi, \tau(\Psi))=0$ and optimize under the constraint $C_{r}(\Psi) = \lbrace \Psi : \: \text{IRD}_{r}(\Psi) = 0 \rbrace $.

\subsection{Penalized Optimization \label{section_penalty}}

\par While the idea of fitting an optimal risk predictor that maximizes class discrimination is a well defined concept, the requirement of $\text{IRD}_{r}(\Psi)=0$ may lead to degenerate solutions of $\tau(\Psi)$ for certain values of $r$. We demonstrate this problem for a simple case of homoscedastic one-dimensional Gaussian logistic regression: Let $X \mid Y = 1 \sim N(\mu, \sigma)$, $X \mid Y = 0 \sim N(-\mu, \sigma)$, $P(Y=1) = \frac{1}{2}$ and the model $\Psi(\beta,x)$ is the one-dimensional logistic function with the parameter $\beta$, meaning $\Psi: \mathbb{R} \times \mathbb{R} \longrightarrow [0,1]$, $\Psi(\beta, x) = \frac{\exp(\beta x)}{1+\exp(\beta x)}$. We set $r = (0,0.5,1)$.

\par Denoting $\tau(\Psi, \beta, t) = (\Psi(\beta,-t),\Psi(\beta,t))$, we use symmetry of the conditional distributions around $x=0$ and the strict monotonicity of $\Psi(\beta, x)$ in $x$ and $\beta$ to show that for any choice of $\beta,t \in \mathbb{R}$ we can minimize error for $i=2$:
\begin{equation*} \begin{split}
 R_{2} & (\Psi(\beta,X), \tau(\Psi, \beta, t)) = P(Y = 1 \mid  \Psi(\beta, X) \in (\Psi(\beta,-t),\Psi(\beta,t)]) \\
& = P(Y = 1 \mid  \beta X \in (- \beta t, \beta t]) =  P(Y = 1 \mid  X \in (-t, t]) = 0.5 = r_{2}
\end{split} \end{equation*}
Similar considerations ensure that the risk prediction errors are equal on both sides:
\begin{equation*} \begin{split}
 R_{1} & (\Psi(\beta,X), \tau(\Psi, \beta, t)) = P(Y=1 \mid X<-t) \\
& = 1 - P(Y=1 \mid X>-t) =  1 - R_{3} (\Psi(\beta,X), \tau(\Psi, \beta, t))
\end{split} \end{equation*}
Using Bayes theorem we have:
\begin{equation*} \begin{split}
P(Y=1 \mid X<-t) = & \frac{P(Y=1)P(X < -t \mid Y=1)}{P(Y=1)P(X < -t \mid Y=1) + P(Y=0)P(X < -t \mid Y=0)} \\
 = & \frac{P(Y=1) \Phi(-t - \mu)}{P(Y=1) \Phi(-t - \mu) + P(Y=0) \Phi(-t + \mu)} \\
 = & \left( 1 + \frac{P(Y=0)}{P(Y=1)} \; \frac{\Phi(-t + \mu)}{\Phi(-t - \mu)}  \right)^{-1}
\end{split} \end{equation*}
where $\Phi$ is the CDF of the standard normal distribution. Using the known inequality:
\begin{equation} \label{eq_Phi_approx}
\frac{\phi(x)}{x + 1/x} < \Phi(-x) < \frac{\phi(x)}{x}  \quad
\forall x>0,
\end{equation}
where $\phi$ is the PDF of the standard normal distribution, we show
an upper bound:
\begin{equation*}
\frac{\Phi(-t + \mu)}{\Phi(-t - \mu)} > \frac{(t+\mu) + \frac{1}{t+\mu}}{t-\mu} \frac{\phi(t-\mu)}{\phi(t+\mu)} = \frac{(t+\mu) + \frac{1}{t+\mu}}{t-\mu} \: e ^{2 \mu t} \quad \forall t>\mu
\end{equation*}
Therefore $\lim_{t \rightarrow \infty} P(Y=1 \mid X<-t) = 0$ and similarly $\lim_{t \rightarrow \infty} P(Y=1 \mid X>t) = 1$. For any arbitrarily small $\varepsilon >0$ we can find a sufficiently large $t$ such that
\begin{equation*}
R_{1} (\Psi(\beta,X), \tau(\Psi, \beta, t)) = P(Y=1 \mid X<-t) =  \leq \varepsilon / 2
\end{equation*}
making the total IRD:
\begin{equation*}
\text{IRD}_{r} (\Psi(\beta,X),  \tau(\Psi, \beta, t))  = \sqrt{\sum_{i=1}^3 \left( R_{i} (\Psi(\beta,X), (\Psi(\beta,-t),\Psi(\beta,t))) - r_{i} \right)^2} \leq \varepsilon
\end{equation*}
As a result, for any given $\beta$ the only solution that satisfies IRD=0 is degenerate:
\begin{equation*}
\lim_{t \rightarrow \infty} \text{IRD}_{r} (\Psi(\beta,X),  \tau(\Psi, \beta, t)) = 0
\end{equation*}

\par There are several alternatives for dealing with this problem. First, we may decide that methodologically we do not allow setting $r_{1} = 0$ or $r_{T} = 1$. This will ensure that the values of $\tau$ are finite but might still lead to very large or very small intervals, depending on the parameters of the model. Alternatively, if our risk estimation method uses optimization to fit the optimal model (for example maximizing the likelihood function in the case of parametric methods) then we can introduce a penalty function $\text{Pen}: \mathbb{R}^{T-1} \rightarrow \mathbb{R}$, which will enable us to balance the properties of $\tau$ (minimal or maximal distance between breakpoints) with the discrimination properties of $\Psi$. This means that instead of maximizing or minimizing a target function $f(\Psi \mid X,Y)$ we maximize/minimize $f(\Psi \mid X,Y) + \gamma \text{Pen}(\tau)$ under an IRD constraint, where $\gamma$ is a tuning parameter that represents the degree of aversion to degenerate solutions.

%
%
%

\par In cases where the degenerate solutions are encountered we would opt for the use of a penalty function. This reflects our understanding that the requirement of ``evenly spread" breakpoints is relatively subjective and should allow for some discretion as to the balance between the ability of the model to separate classes and the resulting interval lengths. By choosing an appropriate penalty function and an aversion parameter $\gamma$ we enable  better fitting of the model according to the circumstances at hand, while introducing a relatively small number of additional parameters. On the other hand, since IRD represents an absolute measure of the model's quality, we believe it must be tightly controlled as the constraint $\text{IRD}_{r} (\Psi, \tau) = 0 $ on any model we might consider. We address the details of constructing this constraint for parametric models in the following section.

\subsection{Estimation of Interval Risk Deviation \label{sec_IRD_est}}

\par So far, we have defined interval risk deviation (IRD) as a property of a score model $\Psi$ and the joint distribution of $(X,Y)$. In order to implement the concept of IRD in a real-life scenario we must describe a way to estimate $\text{IRD}_{r}(\Psi)$ based on a sample of $N$ i.i.d observations from a known $P$-dimensional multivariate distribution $\mathcal{F}(\theta)$ in the form of a $N \times P$ matrix $\mathbf{X}$ and a vector $y \in \lbrace 0,1 \rbrace^{N}$ representing known class memberships (depending on the design of the experiment $y$ may or may not be a random sample). Focusing on parametric methods, we assume that $ P(Y=1 \mid X) = \Psi(\beta,X)$ where $\Psi: \mathbb{R}^{M} \times \mathbb{R}^{P} \rightarrow [0,1]$ is a known function and $\beta  \in \mathbb{R}^{M}$ is the set of parameters controlling the shape of the function (e.g. generalized linear models \cite{GLM_1972} where $M = P$, $g$ is a known, strictly monotone and bijective link function and $\Psi(\beta,x) = g(\beta^{T} x$)). Having previously assumed SMLRP and a closed-from $\Psi$, we can simplify our notation by denoting $\tau(\beta) = \tau(\Psi(\beta,X))$, IRD for a given $\beta$ as $\text{IRD}_{r} (\beta) = \text{IRD}_{r} (\Psi(\beta,X), \tau(\Psi(\beta,X)))$ and the constraint set $C_{r}(\beta) = \lbrace \beta \in \mathbb{R}^{M} : \: \text{IRD}_{r}(\beta) = 0 \rbrace$

\par Many parametric classification methods solve the problem of estimating $\beta$ from a given sample $(\mathbf{X},y)$ by using the maximum likelihood (ML) method. We denote $\mathbf{X}_{j, \cdot}$ the $j$'th row of the matrix $\mathbf{X}$, making our model-predicted probability for the $j$'th observation $\Psi(\beta,\mathbf{X}_{j, \cdot}) = P(Y=1 \mid \mathbf{X}_{j, \cdot})$. Assuming random sampling, there are two equivalent formulations of the ``complete" likelihood function:
\begin{equation} \label{eq_likelihood_cond}
L(\theta_{0}, \theta_{1}, p \mid \mathbf{X},y) = \prod_{j=1}^{N} f_{X,Y} (\mathbf{X}_{j, \cdot}, y_{j}) =  \prod_{j=1}^{N} f_{\theta_{y_{j}}} (\mathbf{X}_{j, \cdot}) P(Y_{j} = y_{j})
\end{equation} \begin{equation} \label{eq_likelihood_beta}
L(\beta, \theta \mid \mathbf{X},y) = \prod_{j=1}^{N} P(Y = y_{j} \mid \mathbf{X}_{j, \cdot}) f_{\theta} (\mathbf{X}_{j, \cdot})
\end{equation}
where $f_{\theta}$ is the density function of the distribution $\mathcal{F}(\theta)$ of $X$ and $f_{\theta_{y_{j}}}$ is the density function of the conditional distribution $\mathcal{F}_{y_{j}}(\theta_{y_{j}})$ of $X \mid Y = y_j$. When using (\ref{eq_likelihood_cond}) we must make additional assumptions about the conditional distribution of $X \mid Y=k$ and a random sampling process, and as a result our estimator $\hat{\beta}$ becomes a function of the estimators $\hat{\theta}_{0}, \hat{\theta}_{1}, \hat{p}$. On the other hand using (\ref{eq_likelihood_beta}) can significantly simplify the optimization process. Since our parameter of interest $\beta$ is isolated in the term $P(Y = 1 \mid \mathbf{X}_{j, \cdot}) = \Psi(\beta,\mathbf{X}_{j, \cdot}) $ and does not effect the term $f_{\theta} (\mathbf{X}_{j, \cdot})$, we can directly maximize the the partial likelihood function $L_{\Psi}$ over the values of $\beta$:
\begin{equation} \label{qe_likelihood_Psi}
L_{\Psi} (\beta \mid \mathbf{X},y) = \prod_{j=1}^{N} P(Y = y_{j} \mid \mathbf{X}_{j, \cdot}) = \prod_{j=1}^{N} \Psi(\beta, \mathbf{X}_{j, \cdot})^{y_{j}} (1 - \Psi(\beta, \mathbf{X}_{j, \cdot}))^{1-y_{j}}
\end{equation}
making the corresponding maximum likelihood optimization problem:
\begin{equation} \label{eq_ML_Psi}
\hat{\beta}_{ML} = \underset{ \beta \in \mathbb{R}^{M} }{ \text{argmax} } \; L_{\Psi} (\beta \mid \mathbf{X},y) 
\end{equation} 
One of the primary advantages of using the second approach for maximum likelihood estimation is that it circumvents the need to estimate the parameters of the conditional distributions, thus making $\beta$ the only estimated parameter. This construction also enables a relatively simple extension of the maximum likelihood framework to semi-parametric models or non-random sampling (see \cite{PRENTICE01121979} for such an extension of logistic regression).

\par Incorporating the IRD constraint into the maximum likelihood framework means that for any given $\beta$ we must be able to estimate the conditional probability over intervals $R_{i}(\beta, \tau) = P(Y = 1 \mid \Psi(\beta, X) \in (\tau_{i-1}, \tau_{i}])$ for an arbitrary $\tau$ from the sample $(\mathbf{X},y)$, which we can then use to calculate the estimate for the unique optimal $\tau(\beta)$ (which is a function of both $\beta$ and the distribution of $X$). Alas, it is usually difficult to derive $R_{i}(\beta,\tau)$ directly from the point-wise conditional probability $P(Y = 1 \mid X)$. In order to facilitate the estimation of IRD for such cases we use an approach similar to (\ref{eq_likelihood_cond}) but in a different context. As we shall see this will enable us to provide parametric estimates of IRD while retaining the convenient structure of our target function $L_{\Psi}$. 

\par We assume that $\Psi(\beta, X) \mid Y=k$ has a known distribution which we denote $F(\eta_{k}({\beta}))$, and that $F$ is continuous. We can now use Bayes's theorem to represent $R(\beta, \tau)$ as:
\begin{equation} \label{eq_nu_R}
R(\beta, \tau) = R(\Psi(\beta,X), \tau) =  \left( 1 + \frac{1-p}{p} \: \frac{1}{\nu(\beta, \tau)} \right)^{-1}
\end{equation}
where:
\begin{equation} \label{eq_nu_def}
\nu_{i}(\beta, \tau) = \frac{P(\Psi(\beta, X) \in (\tau_{i-1},\tau_{i}] \mid Y=1)}{P(\Psi(\beta, X) \in (\tau_{i-1},\tau_{i}] \mid Y=0)} = \frac{F_{\eta_{1}(\beta)}(\tau_{i}) - F_{\eta_{1}(\beta)}(\tau_{i-1})}{F_{\eta_{0}(\beta)}(\tau_{i}) - F_{\eta_{0}(\beta)}(\tau_{i-1})}
\end{equation}
Therefore by estimating the parameters $p$, $\eta_{0}(\beta)$, $\eta_{1}(\beta)$ we can calculate the estimators $\hat{\nu}(\beta, \tau)$ and $\hat{R}(\beta, \tau)$ and use them to find the unique $\hat{\tau}(\beta)$ which solves:
\begin{equation}
\hat{\text{IRD}}_{r}(\beta) = \text{IRD}_{r}(\beta, \hat{\tau}(\beta)) = 0
\end{equation}
The complete likelihood function under these assumptions is:
\begin{equation}
L(\eta_{0}(\beta), \eta_{1}(\beta), p \mid \mathbf{X}, y, \beta) = \prod_{j=1}^{N} f_{\eta_{y_{j}}(\beta)} ( \Psi(\beta, \mathbf{X}_{j, \cdot}) ) P(Y_{j} = y_{j})
\end{equation}
where $f_{\eta_{y_{j}}(\beta)}$ is the density function of $\Psi(X) \mid Y = y_{j} \sim F(\eta_{y_{j}}({\beta}))$. Since the estimation of $p$, $\eta_{0}(\beta)$, $\eta_{1}(\beta)$ is performed for each $\beta$ independently and only in order to verify the compliance with the constraint $\hat{\text{IRD}}_{r}(\beta)=0$, the estimation of these parameters does not effect the value of the target function $L_{\Psi}(\beta \mid \mathbf{X}, y)$. We use this property together with the separability of the likelihood function to estimate $p$ and $\eta_{0}(\beta), \eta_{1}(\beta)$ separately for each beta. For the case of random sampling, the parameter $p$ is relatively easy to estimate independently of $\beta$ as:
\begin{equation}
\hat{p} = \hat{P}(Y=1)=\frac{\sum_{j=1}^{N} y_{i}}{N}
\end{equation} 
although for other cases, such as case-control studies, we might need additional information to correct for biased sampling. For the estimation of $\eta_{0}(\beta), \eta_{1}(\beta)$ we use our assumptions about $F$ to build an ancillary optimization problem for each $\beta$ individually, and use the estimates provided by the ancillary problem (conditional on the value of $\beta$) to estimate $\hat{\text{IRD}}_{r}(\beta, \tau)$ (again the target function $L_{\Psi}$ remain unchanged). The likelihood function for the ancillary problem can be rewritten as:
\begin{equation}
L_{F}(\eta_{0}(\beta), \eta_{1}(\beta) \mid \mathbf{X}, y, \beta) = \prod_{\lbrace j: \: y_{j} = 0 \rbrace} f_{\eta_{0}(\beta)} ( \Psi(\beta, \mathbf{X}_{j, \cdot}) )  \prod_{\lbrace j: \: y_{j} = 1 \rbrace} f_{\eta_{1}(\beta)} ( \Psi(\beta, \mathbf{X}_{j, \cdot}) )
\end{equation}
where $f_{\eta_{k}(\beta)}$ is the density function of the distribution $\Psi(\beta, X) \mid Y=k \sim F(\eta_{k}(\beta))$, and the ancillary maximum likelihood estimation problem is:
\begin{equation}
(\hat{\eta}_{0}(\beta), \hat{\eta}_{1}(\beta))  = \underset{\eta_{0}(\beta), \eta_{1}(\beta)}{\text{argmax}} \: L_{F} (\eta_{0}(\beta), \eta_{1}(\beta) \mid \mathbf{X},y, \beta) 
\end{equation}

\par The construction of multiple ancillary ML estimation problems can be computationally demanding, but luckily for many known distributions the formula for the ML estimator $\hat{\eta}_{k}(\beta)$ is known, and in fact it is relatively simple to calculate it directly from the known ML estimators of the conditional distribution $X \mid Y=k \sim \mathcal{F}(\theta_{k})$ as $\hat{\eta}_{k}(\beta) = \eta(\beta, \hat{\theta}_{k})$. This fact significantly reduces the complexity of estimating the IRD constraint.
For a concrete example of such a case using Gaussian logistic regression (which can be easily extended to other GLM instances) see section \ref{section_case_study}. 

\par Alternatively it would be possible to use non-parametric estimators or approximation. While it is possible to attempt to directly approximate $Q_{i}(\beta) = P(Y = 1 \mid \Psi(\beta, X) \leq \tau_{i}(\beta))$, $P( \Psi(\beta, X) \leq \tau_{i-1}(\beta))$, $P( \Psi(\beta, X) \leq \tau_{i}(\beta))$ and utilize (\ref{eq_R_Q}) for a direct estimation of $R(\beta)$ (in this case the equality is useful since $\beta$ is fixed), it would often be more convenient to approximate $p$ and $F_{\eta_{k}(\beta)}$ at $\lbrace \tau_{i}(\beta) \rbrace_{i=1}^{T-1}$ (a total of $2T-1$ approximations per $\beta$) and use (\ref{eq_nu_R}) to calculate $\hat{\nu}(\beta)$ and the resulting IRD estimate. 
The primary advantage of this approach is that it requires no additional assumptions about the distribution of $(X,Y)$ and can therefore be easily extended to other non-ML estimation methods of $\beta$. On the other hand, by using non-parametric methods we pay a price both in the quality of our estimates (we ignore information about the distribution of $X$) and in the computational complexity of our estimation scheme.

\par Finally, regardless of our approach to estimation, we recognize the fact that under realistic scenarios we will have to use numeric methods for the calculation of $\hat{\tau}(\beta)$ and for the estimation of the required parameters. We therefore set a low threshold $\varepsilon$ and accept $\beta$ as feasible if our estimated IRD satisfies:
\begin{equation} \label{eq_est_IRD_constr}
\hat{\text{IRD}}_{r}(\beta) = \text{IRD}_{r}(\beta, \hat{\tau}(\beta)) = \Vert \hat{R}(\beta, \hat{\tau}(\beta)) - r \Vert < \varepsilon
\end{equation}
making our feasibility set $\hat{C}_{r}(\varepsilon) = \lbrace \beta \in \mathbb{R}^{M} : \: \hat{\text{IRD}}_{r}(\beta) < \varepsilon \rbrace$ and the penalized Ordinal Risk-Group (ORG) optimization problem:
\begin{equation} \label{optim_constr_pen}
\hat{\beta}_{ORG} = \underset{\beta \in \hat{C}_{r}(\varepsilon)}{\text{argmax}} \: L_{\Psi}(\beta \mid \mathbf{X},y) + Pen(\hat{\tau}(\beta))
\end{equation}

\par If for our choice of $r$ we have $\hat{\beta}_{LR} \in \hat{C}_{r}(\varepsilon)$ and the distances between the set of breakpoints $\hat{\tau}(\hat{\beta}_{LR})$ are non-degenerate, then the global (unconstrained) optimality of $\hat{\beta}_{LR}$ ensures that it is also the optimal solution of the constrained ordinal problem. It may also serve as the optimal solution for the constrained and penalized ordinal problem, but that will depend on the selection of the aversion parameter. On the other hand, as we've seen in section \ref{section_lower_bounds_IRD}, once we have more than a single breakpoint we introduce limits of feasibility into the maximization problem and may discover that the solution to (\ref{optim_LR}) is no longer feasible ($\hat{\text{IRD}}_{r} (\hat{\beta}_{LR}) \geq \varepsilon$). For such cases we must define a new constrained optimization problem and look for a new optimal solution. We discuss an example for such a case in the following section.

\par There are two issues we leave outside the scope of this paper. First, although the consistency of constrained ML estimation has been explored in various contexts (for example for mixture models \cite{CML_hathaway_1985}), the consistency of the ML estimators under the specific constraint of $\hat{\text{IRD}}=0$ requires verification. Similarly, although in our description of the problem $\tau$ is a function of $\beta$ and the parameters of $\mathcal{F}$ (a result of the uniqueness demonstrated in appendix \ref{appendix_unique_tau}), it remains to be verified whether the consistency of the estimator $\hat{\beta}$ ensures the consistency of $\hat{\tau}(\hat{\beta})$. We expect the fact that for many cases $\hat{\tau}(\beta)$ has no analytical solution to further complicate this problem. 

\par Second, in order to measure our estimation errors we require a method for building right-sided confidence intervals for IRD based on the distribution of $\hat{\text{IRD}}$ for a given $\beta$. Since $\nu(\beta, \tau(\beta))$ is a ratio of CDF differences, the process of deriving the distribution of $\hat{\nu}(\beta, \hat{\tau}(\beta))$ from the distribution of $\hat{\eta}_{k}(\beta)$ would require several steps of approximation, primarily since $\hat{\tau}(\beta)$ the result of numeric estimation (even if the distribution of $\hat{\eta}_{k}(\beta)$ is known). Similar problems apply for non-parametric estimators, 
although we can see two possible approached for a solution. The first approach would be to use an equivalent definition of IRD as $\text{IRD}_{r}(\Psi, \tau) = \max_{i} \vert R_{i}(\Psi, \tau) - r_{i} \vert$ and try to prove a Glivenko-Cantelli \cite{vdV00} type theorem for conditional distributions which would describe the necessary conditions ensuring that:
\begin{equation} \label{eq_sup_cond_diff}
\sup_{x_{2} > x_{1}} \; \vert \hat{P}_{N}(Y = 1 \mid X \in (x_{1},x_{2}]) - P(Y = 1 \mid X \in (x_{1},x_{2}]) \vert \underset{N \rightarrow \infty}{\longrightarrow} 0
\end{equation}
where $\hat{P}_{N}$ is the empirical conditional probability estimator:
\begin{equation}
\hat{P}_{N}(Y = 1 \mid X \in (x_{1},x_{2}]) = \frac{\vert \lbrace j \; : \; \Psi(\mathbf{X}_{j}) \in (x_{1},x_{2}] \; \wedge \; y_{j} = 1 \rbrace \vert}{\vert \lbrace j \; : \; \Psi(\mathbf{X}_{j}) \in (x_{1},x_{2}]\rbrace \vert}
\end{equation}
Building on this result we can attempt to derive the asymptotic distribution of (\ref{eq_sup_cond_diff}) and try to construct test that will be the conditional equivalent of the Kolmogorov-Smirnov test \cite{vdV00}.
The second approach, which is less elegant but more plausible, would be to combine (\ref{eq_nu_def}) with the well known asymptotic behaviour of the empirical conditional distribution function: 
\begin{equation}
\hat{F}_{\eta_{k}(\beta)}^{(N_{k})}(t) = \frac{\vert \lbrace j \mid \Psi(\beta, \mathbf{X}_{j, \cdot}) \leq t, y_j = k \rbrace \vert}{N_{k}} \quad k=0,1
\end{equation}
where $N_{k} = \vert \lbrace j : y_{j} = k\rbrace \vert$. By the central limit theorem \cite{vdV00}, this estimator weakly converges to $F_{\eta_{k}(\beta)}$ pointwise:
\begin{equation}
\sqrt{N_{k}} \: \left( \hat{F}_{\eta_{k}}^{(N_{k})}(t) - F_{\eta_{k}}(t) \right) \underset{N_{k} \rightarrow \infty}{\longrightarrow} N \left( 0,F_{\eta_{k}}(t) \left( 1-F_{\eta_{k}}(t) \right) \right)
\end{equation}
However the fact that $\hat{\tau}(\beta)$ changes as a function of the sample (and is not a fixed quantile $t$) means that the points where $\hat{F}_{\eta_{k}}^{(N_{k})}$ is estimated change as a function of the sample ($N_{k}$), therefore the nature of the convergence and the resulting asymptotic distribution depend on the convergence of $\hat{\tau}(\beta) \rightarrow \tau(\beta)$. We would therefore need to find sufficient conditions for: 
\begin{equation} \begin{split}
&  \tau(\hat{\beta})^{(N_{k})} \rightarrow \tau(\beta) \: \Rightarrow \: \\ 
& \sqrt{N_{k}} \: \left( \hat{F}_{\eta_{k}}^{(N_{k})}(\tau(\hat{\beta})^{(N_{k})}) - F_{\eta_{k}}(\tau(\hat{\beta})^{(N_{k})}) \right) \underset{N_{k} \rightarrow \infty}{\longrightarrow} N \left( 0,F_{\eta_{k}}(\tau(\beta)) \left( 1-F_{\eta_{k}}(\tau(\beta)) \right) \right)
\end{split} \end{equation} 
The next step would be to use the strong uniform convergence of $\hat{F}_{\eta_{k}}^{(N_{k})} \rightarrow \hat{F}_{\eta_{k}}$ (as provided by the Glivenko-Cantelli theorem) to approximate $\hat{F}_{\eta_{k}}^{(N_{k})}(\hat{\tau}_{i}(\beta))$ as normally distributed $\mu = \hat{F}_{\eta_{k}}^{(N_{k})}(\hat{\tau}_{i}(\beta))$ and $\sigma^{2} = \hat{F}_{\eta_{k}}^{(N_{k})}(\hat{\tau}_{i}(\beta)) \left( 1 - \hat{F}_{\eta_{k}}^{(N_{k})}(\hat{\tau}_{i}(\beta)) \right) $. Combined with Donsker's theorem \cite{dudley1999}, we should be able to estimate the asymptotic distribution of the difference of two points of $\hat{F}_{\eta_{k}}^{(N_{k})}$ (which make both the numerator and the denominator of $\hat{nu}_{i})$ 
as normallt distributed with mean $\mu = \hat{F}_{\eta_{k}}(\hat{\tau}_{i} (\beta)) - \hat{F}_{\eta_{k}}(\hat{\tau}_{i-1} (\beta))$ and variance $\sigma^{2} = \left( \hat{F}_{\eta_{k}}(\hat{\tau}_{i} (\beta)) - \hat{F}_{\eta_{k}}(\hat{\tau}_{i-1} (\beta)) \right) \left( 1 - \hat{F}_{\eta_{k}}(\hat{\tau}_{i} (\beta)) - \hat{F}_{\eta_{k}}(\hat{\tau}_{i-1} (\beta)) \right) $. The final step would be to use work by Hinkley \cite{HINKLEY01121969}, which describes the distribution of a ratio of two non-correlated normal random variables, to approximate the distribution of $\hat{\nu}_{i}(\beta,\hat{\tau}(\beta))$ which we can use to estimate $P(\hat{\text{IRD}}_{r}(\beta) > \varepsilon)$. We note that the construction of such test would also mean that we can change the definition  of our feasibility set to $\hat{C}_{r}(\varepsilon, \alpha) = \lbrace \beta \in \mathbb{R}^{M} : \: P(\hat{\text{IRD}}_{r}(\beta) < \varepsilon) > 1-\alpha \rbrace$. We leave the details and proof of these ideas for future papers.

\section{Case study: Gaussian Logistic Regression \label{section_case_study}}

\par Logistic regression is one of the most studied classification methods in the scientific literature and has been widely applied in statistics, scientific research and industry. The name "logistic" for the function $f(x) = \frac{e^{x}}{1 + e^{x}}$ was originally coined by Verhulst as early as the 19'th century, but it was Cox \cite{cox_1969} who used it first in the context of binary data analysis. The concept of multinomial logistic regression was first suggested by Cox (1966) \cite{Cox_1966} and developed independently by Theil (1969) \cite{Theil_1969}. The link to ordered choice models (ordered logistic regression) was made by McFadden in his paper from 1973 \cite{McFadden_1973}. Cramer (2002) \cite{{RePEc:dgr:uvatin:20020119}} has a complete historical review.

\par Logistic regression belongs to the group of classification methods that estimate class membership probability rather than predict class membership. It is a special instance of a larger group of parametric models called generalized linear models (GLM \cite{GLM_1972}), which extends linear models by allowing the addition of a predefined link function $g$ that connects the linear model $\beta^{T} X$ ($\beta \in \mathbb{R}^{P}$) to the response variable $Y$ (meaning that $g^{-1} (Y) = \beta^{T} X$) and assuming that the distribution of $X$ is from an exponential family. In the case of logistic regression the link function is assumed to be the logistic function $g(\beta^{T} x) = \frac{e^{\beta^{T} x}}{1+e^{\beta^{T} x}}$, making the inverse function $g^{-1}(p) = \text{logit}(p) =log \left( \frac{p}{1-p} \right) $. The probability of belonging to the ``special class" (in the case of 2 classes) conditioned on the r.v $X$ is assumed to be:
\begin{equation} \label{eq_LR_model}
P_{\beta}(Y = 1 \mid X ) = \frac{e^{\beta^{T} X}}{1+e^{\beta^{T} X}}
\end{equation}
where the assumptions on $(X,Y)$ can be modified to match a wide variety of cases, for example to a non-random sampling scheme like case-control studies or semi-parametric models \cite{PRENTICE01121979}. 

\par For the purpose of demonstrating our method we assume that $\mathbf{X}$ is a $N \times P$ matrix representing $N$ i.i.d random samples from a $P$-dimensional multivariate normal distribution. 
Under this assumption the vector $y \in \lbrace 0,1 \rbrace^{N}$ of class memberships represents the result of $N$ independent Bernoulli random variables $\lbrace Y_{j} \rbrace_{j=1}^{T}$. Even under these assumptions, the ML problem does not have an analytical least squares solution, and is usually solved using numerical maximum likelihood algorithms. The log-likelihood function is:
\begin{equation}
l_{LR} (\beta \mid \mathbf{X},y) = \log(L_{LR}(\beta \mid \mathbf{X},y)) = \sum_{j=1}^{N}  y_{j} \beta^{T} \mathbf{X}_{j, \cdot}  - \sum_{j=1}^{N} \log \left( 1 + e^{ \beta^{T} \mathbf{X}_{j, \cdot} } \right)
\end{equation}
and the logistic regression (LR) maximum likelihood optimization problem is:
\begin{equation} \label{optim_LR}
\hat{\beta}_{LR} = \underset{\beta \in \mathbb{R}^{P}}{\text{argmax}} \; l_{LR}(\beta \mid \mathbf{X},y)
\end{equation}

\par As we have noted in section \ref{sec_IRD_est}, the parametric estimation of IRD requires several additional assumptions. We assume that the conditional distributions $X \mid Y=k$ ($k=0,1$) are also multi-variate normal, and in order to achieve SMLRP we assume equal conditional covariance. Using terms defined to construct (\ref{optim_constr_pen}) our assumptions on Gaussian logistic regression translate into the following:
\begin{equation}
\Psi(\beta, x) = \frac{\exp(\beta^{T}x)}{1 + \exp(\beta^{T}x)}, \quad X \mid Y=k  \sim MVN(\underline{\mu}_{k}, \Sigma)
\end{equation} 
As a result: 
\begin{equation} \label{eq_beta_X_norm_dist}
\text{logit}(\Psi(\beta, X)) \mid Y=k \: \sim \: N(\mu_{k}(\beta) = \beta^{T} \underline{\mu}_{k}, \sigma^{2}(\beta) = \beta^{T} \Sigma \beta)
\end{equation}
Conveniently, the ML estimators follow a similar pattern. For the construction of the known ML estimators of the distribution of $X$ we denote $\mathbf{X}^{(k)}$ as the matrix composed of all the lines of $\mathbf{X}$ for which $y_{j} = k$, $N_{1} = \sum_{j=1}^{N} y_{j}$, $N_{0} = N - \sum_{j=1}^{N} y_{j}$ and $\overline{\mathbf{X}}_{m}^{(k)} = \frac{\sum_{j=1}^{N_{k}} \mathbf{X}^{(k)}_{j,m}}{N_{k}}$ as the average of the $m$'th column of $\mathbf{X}^{(k)}$ ($m = 1, \dots, P$). The ML estimators are: 
\begin{equation} 
\hat{\underline{\mu}}_{k} = \overline{\mathbf{X}}^{(k)} = (\overline{\mathbf{X}}_{1}^{(k)}, \ldots, \overline{\mathbf{X}}_{P}^{(k)}) \quad (k = 0,1)
\end{equation}
and having assumed equal covariance we use the pooled covariance matrix estimator:
\begin{equation} \begin{split}
\hat{\Sigma}_{k} = & \frac{1}{N_{k}} \sum_{j=1}^{N_{k}} (\mathbf{X}^{(k)}_{j,\cdot} - \overline{\mathbf{X}}^{(k)})^{T} (\mathbf{X}^{(k)}_{j,\cdot} - \overline{\mathbf{X}}^{(k)}) \\
\hat{\Sigma} = & \frac{1}{N} ( N_{0} \hat{\Sigma}_{0} + N_{1} \hat{\Sigma}_{1}) \\ 
\end{split} \end{equation}

\par The assumption of multivariate-normal conditional distributions enables us to avoid the construction of an ancillary ML problem for each $\beta$ by using the known relationship 
between the ML estimators $(\hat{\underline{\mu}}_{0}, \hat{\underline{\mu}}_{1}, \hat{\Sigma})$ and the $\beta$-transformed ML estimators $(\hat{\mu_{k}}(\beta),  \hat{\sigma}(\beta))$: 
\begin{equation}
\hat{\mu_{k}}(\beta) = \beta^{T} \hat{\underline{\mu_{k}}}, \quad \hat{\sigma}(\beta) = \sqrt{\beta^{T} \hat{\Sigma} \beta}
\end{equation}

\par The strict monotonicity of $logit(p) = log \left( \frac{p}{1-p} \right) = \Psi^{-1}_{\beta}(p)$ in $p$ means that we can use the equality:
\begin{equation} \begin{split}
P & (\Psi(\beta, X) \in (\tau_{i-1}(\beta),\tau_{i}(\beta)] \mid Y=k) \\
& = P(\beta^{T} X \in (\text{logit}(\tau_{i-1}(\beta)),\text{logit}(\tau_{i}(\beta))]  \mid Y=k) \\
& = \Phi \left( \frac{\text{logit}(\tau_{i}(\beta)) - \mu_{k}(\beta)}{\sigma(\beta)} \right) - \Phi \left( \frac{\text{logit}(\tau_{i-1}(\beta)) - \mu_{k}(\beta)}{\sigma(\beta)} \right)
\end{split} \end{equation} 
to estimate $\nu_{i}(\beta)$ as: 
\begin{equation} 
\hat{\nu}_{i}(\beta) = 
\frac{
	\Phi \left( \frac{\text{logit}(\hat{\tau}_{i}(\beta)) -   \hat{\mu}_{1}(\beta)}{\hat{\sigma}(\beta)} \right) - 
	\Phi \left( \frac{\text{logit}(\hat{\tau}_{i-1}(\beta)) - \hat{\mu}_{1}(\beta)}{\hat{\sigma}(\beta)} \right)}
{
	\Phi \left( \frac{\text{logit}(\hat{\tau}_{i}(\beta)) - \hat{\mu}_{0}(\beta)}{\hat{\sigma}(\beta)} \right) - 
	\Phi \left( \frac{\text{logit}(\hat{\tau}_{i-1}(\beta)) - \hat{\mu}_{0}(\beta)}{\hat{\sigma}(\beta)} \right)} 
\end{equation}
Finally, we use the assumption of random sampling to estimate $\hat{p} = \frac{1}{N} \sum_{j=1}^{N} y_{j}$ and utilize (\ref{eq_nu_R}) to construct our parametric estimation of IRD for logistic regression.


\subsection{Example: The Wisconsin Diagnostic Breast Cancer \\ (WDBC) Dataset\label{section_example}}

\par In this section we bring an example of the sub-optimality of using one of the most commonly used classification methods - Logistic Regression (LR) - to solve a relatively simple ordinal problem. We then provide the Ordinal Risk-Group version of Logistic Regression (ORG-LR) solution to the problem and compare our results.

\par The dataset we used for this example is the extensively used Wisconsin Diagnostic Breast Cancer   \cite{10.1117/12.148698} dataset from the UCI Machine Learning Repository \cite{UCI},
which contains the analysis of cell nuclei from 556 patients using digitized images of fine needle aspirate (FNA) of extracted breast masses. Since we assumed a continuous $\mathcal{F}$ and equal variance we selected the following features for the construction of our models: texture, log area, smoothness, log compactness, log concave points, log symmetry and an intercept variable. The final result from the diagnosis ("Malignant" $N = 212$/ "Non-Malignant" $N = 344$) was used as the dependent variable for the logistic regression analysis and ordinal risk group analysis.
The code for this example was written in the R programming language \cite{cran} using internal optimization algorithms and the Augmented Lagrangian Adaptive Barrier Minimization Algorithm (the alabama library) with the constraint $\hat{\text{IRD}}_{r}(\beta)  < \varepsilon$ = 1e-07. 

\par The first set of risk levels we tested was $r_{1} = (10\%, 50\%, 90\%)$. The estimated IRD for the logistic regression solution $\hat{\beta}_{LR}$ and the matching (non-degenerate) set of breakpoints $\hat{\tau}(\hat{\beta}_{LR}) = (-2.6918, 9.1698)$ was slightly above our set feasibility threshold ($\hat{\text{IRD}}_{r_{1}} (\hat{\beta}_{LR})$ = 7.8776e-05). 
The second set of risk levels we tested was $r_{2} = (20\%, 50\%, 80\%)$. For this set the solution $\hat{\beta}_{LR}$ provided by the logistic regression was clearly infeasible: on the one hand the interval associated to the 50\% risk level was clearly degenerate ($ \hat{\tau}_{2}(\hat{\beta}_{LR}) - \hat{\tau}_{1}(\hat{\beta}_{LR})  <$ 1.1e-06) and the estimated IRD was high above our set threshold ($\hat{\text{IRD}}_{r_{2}}(\hat{\beta}_{LR}) = 0.0014 > \varepsilon$). We proceeded to construct a constrained maximum likelihood problem for both $r_{1}, r_{2}$ as described in section \ref{section_case_study}. For $r_{1}$ an unconstrained problem was sufficient, with IRD $<$ 1e-07 and a non-degenerate $\hat{\tau}(\hat{\beta})$. For $r_{2}$, the unpenalized ordinal risk-group problem produced degenerate solutions, so we added the penalty function $Pen(\beta, \tau) = (\frac{\max \vert \tau_{i} - \tau_{i-1} \vert}{\beta^{T}(\hat{\mu}_{1} - \hat{\mu}_{0})} -1)^{2}$, which was designed to balance between the distance between the breakpoints $\tau_{1}, \tau_{2}$ and the distance between estimated class means, and selected a penalty coefficient $\gamma = 10$. Since in many cases the optimization algorithm converged to a local minimum we randomly sampled 25,000 starting points for each set of risk categories we tested. The estimates for logistic regression (LR) and ordinal risk-group logistic regression (ORG-LR) for both  $r_{1}, r_{2}$ are summarized in table \ref{table_LR_ORG}.
\begin{table}
\begin{center} \begin{tabular}{l|ll|ll}
& \multicolumn{2}{c|}{$r_{1}$} & \multicolumn{2}{c}{$r_{2}$} \\
\hline
\hline
 & \multicolumn{1}{c}{LR} & ORG-LR & \multicolumn{1}{c}{LR} & ORG-LR \\
\hline
Intercept & -87.6641 & 12.3928 & -87.6641 & -3.1977 \\
Texture & 0.2864 & 0.1894 & 0.2864 & 0.1844\\
log(Area)  & 11.9706 & 0.8999 & 11.9706 & -0.3573 \\
Smoothness & 67.7342 & -25.2575 & 67.7342 & 4.0826 \\
log(Compactness) & -1.8107 & -3.9205 & -1.8107 & 0.3070 \\
log(Concave Points) & 3.6698 & 11.8320 & 3.6698 & 0.1162 \\
log(Symmetry) & 3.2556 & 0.7888 & 3.2556 & -0.3529 \\
\hline
log-Likelihood &  -45.9909 & -94.8944 & -45.9909 & -311.019 \\
\hline
$\hat{\tau}_{1}$ & -2.6918 & -4.4029 & 3.117868 & -0.7088  \\
$\hat{\tau}_{2}$ & 9.1698 & 6.8524 & 3.117869 & 1.3376 \\
\hline
$P(Y = 1 \mid \Psi(\hat{\beta},X) \leq \tau_{1})$ & 9.2925\% & 9.9751\% & 16.7680\% & 19.9857\% \\
$P(Y = 1 \mid \Psi(\hat{\beta},X) \in (\tau_{1}, \tau_{2}])$ & 50.3289\% & 50.0130\% & 51.5483\% & 50.0116\% \\
$P(Y = 1 \mid \Psi(\hat{\beta},X) > \tau_{2})$ & 89.5768\% & 89.9870\% & 78.7930\% & 79.9948\% \\	
\hline
$\hat{\text{IRD}}$ & 7.88e-05 & 9.69e-08 & 0.0014 & 3.66e-08 \\ 
\end{tabular} \end{center}
\caption[LR and ORG-LR maximum likelihood estimators for $r_{1},r_{2}$]{Maximum likelihood (ML) estimators of coefficients, log-likelihood, optimal breakpoints $\tau$, model predicted probabilities (assuming multivariate normal distribution) and IRD estimates for unconstrained logistic regression and ordinal risk-group logistic regression (ORG-LR) for $r_{1} = (10\%, 50\%, 90\%)$, $r_{2} = (20\%, 50\%, 80\%)$ \label{table_LR_ORG}}
\end{table}

\par The differences between the two methods can be further illustrated by looking at the distributions of the logit-transformed predicted probabilities $\lbrace logit(\hat{Y_{i}}) \rbrace_{i=1}^{N}$ of both methods. Figure \ref{figure_LR_vs_ORG-LR_10-50-90} illustrates the logistic regression solution for $r_{1}$ (top graph) and the ordinal risk-group logistic regression solution for $r_{1}$ (bottom graph), and figure \ref{figure_LR_vs_ORG-LR_20-50-80} illustrate the same results for $r_{2}$. A comparison of the two graphs in each figure shows that for both sets of risk levels, the ORG-LR solution compromises the quality of separation between the two classes in order to achieve feasibility, and in the more extreme case of $r_{2}$ reduces separation dramatically in order to avoid degenerate solutions.
\begin{figure}[h]
\center
\includegraphics[scale=0.9]{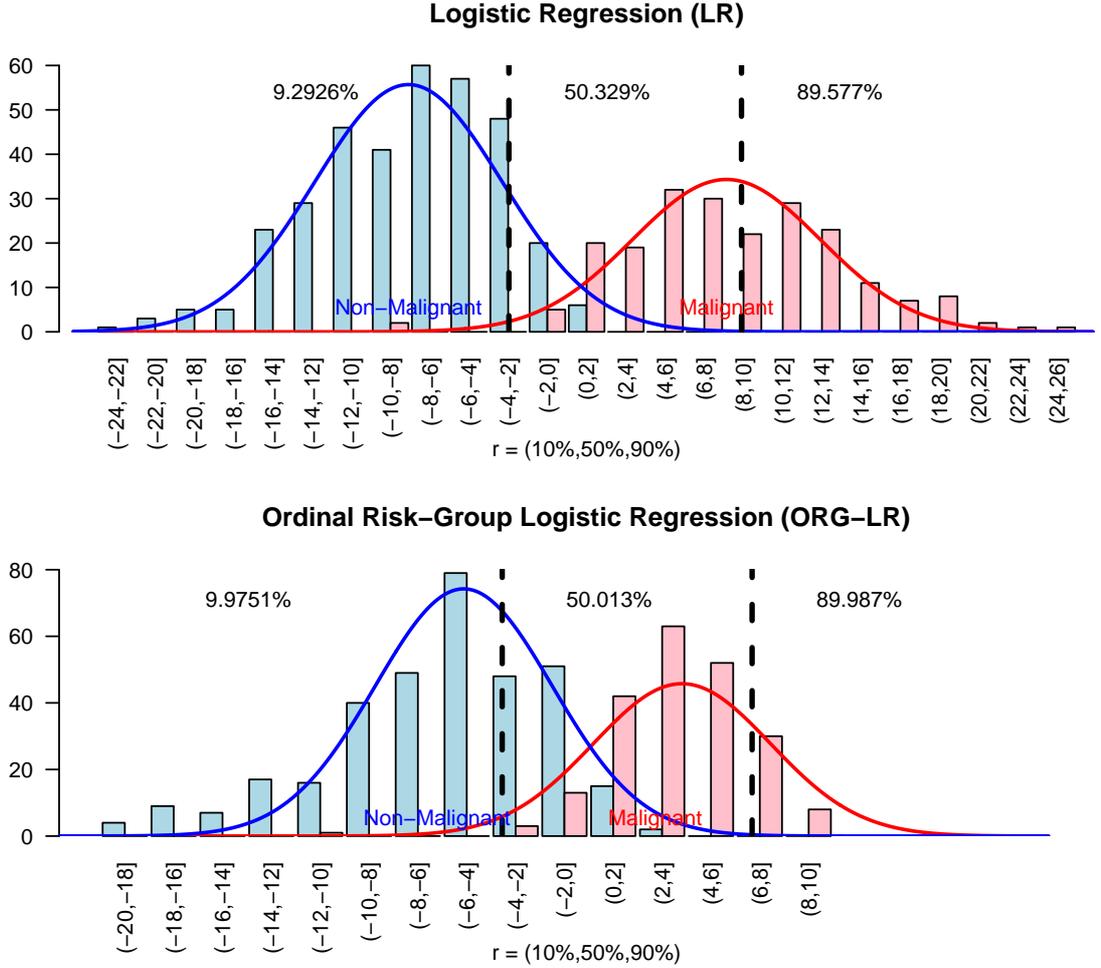}
\caption[The WDBC dataset: logit-transformed LR and ORG-LR predictions for $r_{1}$]{\label{figure_LR_vs_ORG-LR_10-50-90} Logit-transformed predictions and separation between the malignant and non-malignant classes of logistic gegression (LR) (top graph) and the Ordinal Risk-Group Logistic Regression (ORG-LR) (bottom graph) for $r_{1}$. The black dotted lines mark the matching sets of breakpoints $\hat{\tau}(\hat{\beta}_{LR})$ and $\hat{\tau}(\hat{\beta}_{ORG-LR})$.}
\end{figure}
\begin{figure}[h]
\center
\includegraphics[scale=0.9]{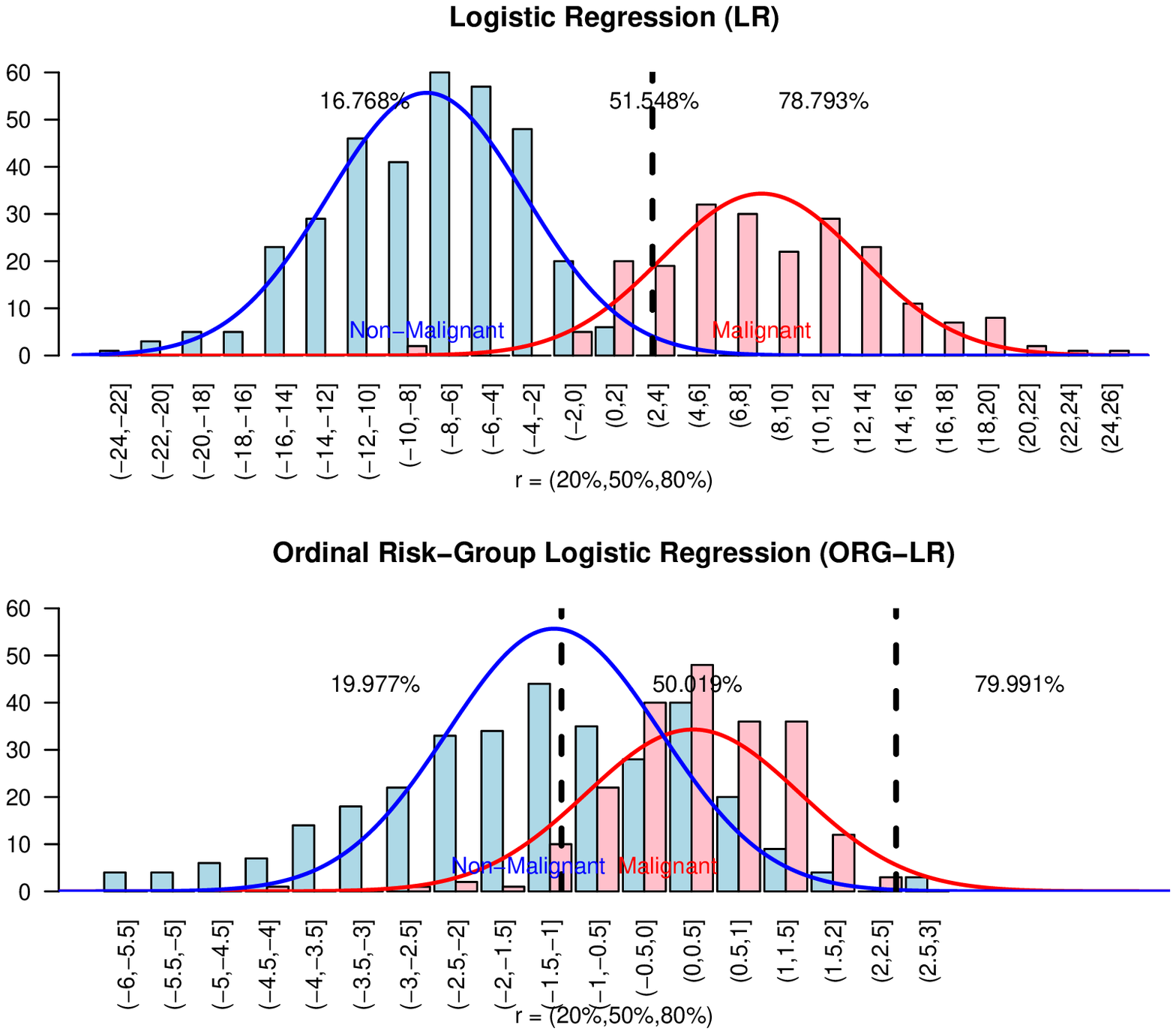}
\caption[The WDBC dataset: logit-transformed LR and ORG-LR predictions for $r_{2}$]{\label{figure_LR_vs_ORG-LR_20-50-80} Logit-transformed predictions and separation between the malignant and non-malignant classes of logistic gegression (LR) (top graph) and the Ordinal Risk-Group Logistic Regression (ORG-LR) (bottom graph) for $r_{2}$. The black dotted lines mark the matching sets of breakpoints $\hat{\tau}(\hat{\beta}_{LR})$ and $\hat{\tau}(\hat{\beta}_{ORG-LR})$.}
\end{figure}

\par In order to validate the results of our new method and compare them to the performance of the logistic regression solution we performed a cross validation study. We randomly divided the dataset into two groups: 90\% of the patients were randomly sampled as a training set, from which a logistic regression model and an ordinal risk-group model were constructed, and the remaining 10\% were used as a test set for the models. We repeated the process with 25,000 random samples and calculated the percentage of "Melignant" cases found in each predicted risk group for each of the models. The results of the implementation of the training models on the test sets for $r_{1} = (10\%, 50\%, 90\%)$ were $(0.7173\%, 74.6978\%, 100\%)$ for logistic regression (IRD = 0.07962) and $(3.8804\%, 57.4549\%, 99.9329\%)$ for ordinal risk-group logistic regression (IRD = 0.01917). The cross validation results for $r_{2} = (20\%,50\%,80\%)$ using ORG-LR were $(15.7349\%, 52.9507\%, 84.9715\%)$ (IRD = 0.0052). Since the logistic regression solution was degenerate for $r_{2}$ we were unable test it with cross validation. 

\par The comparison of the cross validation results from the two methods for $r_{1}$ shows that although both models did not perform wery well, the ORG-LR solution outperformed the logistic regression solution (IRD is approx. 4 times smaller), in spite of the fact that differences in IRD between the two models do not seem significant. We estimate that one of the reasons for the of the high absolute deviance in IRD of all models and risk levels we tested is that the data is not exactly normally distributed (as evident in figures \ref{figure_LR_vs_ORG-LR_10-50-90},\ref{figure_LR_vs_ORG-LR_20-50-80}). In addition, specifically for ORG-LR, we suspect that the generic algorithms we used for the ordinal risk-group maximum likelihood constrained optimization failed to converge to the real global minimum in some of the cross-validation iterations. Verifying this hypothesis would require either use of more specific optimization algorithms (for example by analytically calculating the derivatives of the IRD constraint) or by a very large scale simulation study that would require billions of iterations. Both approaches are outside the scope of this paper and we leave them for future studies.

\section{Conclusions}

\par The exact estimation of the conditional risk function is an important part of practical and theoretical research. However the practical application of this information is very often in the form of a finite and small set of resulting actions. Although conditional risk quantiles provide valuable information, we ultimately want to know the risk associated with adjacent non-overlapping intervals in order to create distinct ordinal risk groups. As we have demonstrated in section \ref{section_cond_precentiles}, quantile regression is not useful for that purpose. Furthermore, section \ref{section_lower_bounds_IRD} demonstrates that the practice of dividing post-hoc the continuous estimate of conditional risk into intervals ignores the limitations introduced by the lower bounds on IRD and may produce sub-optimal or degenerate solutions.

\par Our formulation of the optimization problem, as presented in section \ref{section_ORGC}, reflects our understanding that while the model's ability to separate the classes remains the key issue, we must introduce both a new constraint and a penalty function in order to achieve two additional objectives: an accurate risk distribution and a usable partition scheme. While IRD represents an absolute measure of the model's quality and must be a constraint on the optimal solution, the "softer" requirement on minimal interval length should allow for flexibility in application. We believe that a penalty function enables better control and adaptation through the choice of function and the aversion parameter.

\par Finally, we wish to emphasize the implications of the most counter intuitive result of this paper - the existence of limitations on certain risk structures (the vector $r$) in the form of lower bounds on the error rate IRD (equation \ref{eq_IRD_def}).  Although most of the examples we described are linear or logistic models with Gaussian conditional distributions, the existence of lower bounds holds for any continuous risk estimator. A re-evaluation of the optimal properties of such estimators in the context of risk discretization is therefore required. We leave the specifics of applying these ideas to other classification methods as well as proofs of consistency and the construction of confidence intervals to future studies.

\addcontentsline{toc}{section}{Reference}
\bibliographystyle{plain}
\bibliography{arxiv}

\appendix
\section{Appendix: Equivalence of strict monotonicity of CDF ratios over intervals and the Strict Monotone Likelihood Ratio Property \label{appendix_MLRP_cdf_ratio_mono}}

\par Let $X$ be a continuous $P$-dimensional random vector and $Y \in \lbrace 0,1 \rbrace$ a Bernoulli random variable representing class membership. Assume that the conditional distributions $X \mid Y=k$ ($k = 0,1$) are also continuous and the conditional densities $f_{X \mid Y = k}$ are finite. Let $\Psi : \mathbb{R}^{P} \rightarrow \mathbb{R}$ be a finite continuous risk predictor. For a single observation $X$, the \emph{likelihood ratio} of the risk predictor $\Psi$ between the two alternatives represented by $Y$ is defined as the ratio of the conditional densities:
\begin{equation*}
\Lambda_{\Psi}(x) = \frac{P(\Psi(X) = x \mid Y=1)}{P(\Psi(X) = x \mid Y=0)} = \frac{f_{\Psi(X) \mid Y=1}(x)}{f_{\Psi(X) \mid Y=0}(x)}
\end{equation*}

\par The \emph{Strictly Monotone Likelihood Ratio Property} (SMLRP) demands that for a given $X, Y, \Psi$ the ratio $\Lambda_{\Psi}(x)$ is a strictly monotone function of $x$. It is worth noting that by using Bayes theorem we can show that demanding SMLRP is equivalent to demanding strict monotonicity of $P(Y=1 \mid \Psi(X)=x)$ in $x$:
\begin{equation}
P(Y=1 \mid \Psi(X)=x) = \frac{p f_{1}(x)}{p f_{1}(x)+(1-p) f_{0}(x)} = \frac{1}{1 + \frac{1-p}{p} \frac{1}{\Lambda_{\Psi}(x)}}
\end{equation}
where $p = P(Y=1)$ and $f_{k} (x) = f_{\Psi(X) \mid Y=k} (x)$ are
the density functions of the conditional distributions. This
equivalence means that in terms of conditional probability, SMLRP is
equivalent to strict \emph{pointwise} monotonicity in the condition,
in contrast to the requirement of monotonicity over right-expanding
intervals in section \ref{section_lower_bounds_IRD}, which we
defined as the strict  monotonicity of $R(\beta, \tau)_i = P(Y=1
\mid \Psi(X) \in (\tau_{i-1}, \tau_{i}])$ in $\tau_{i}$ for any
$\tau_{i-1}$ while $\tau_{i} > \tau_{i-1}$.

\begin{theorem} \label{th_SMLRP_mono_R_equiv}
SMLRP $\Leftrightarrow$ $\forall \tau_{i-1}$, $\forall \tau_{i} >
\tau_{i-1}$ $R(\beta, \tau)_i$ is strictly increasing in $\tau_{i}$.
\end{theorem}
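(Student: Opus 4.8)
The plan is to reduce the equivalence to a statement about the conditional CDFs alone and then treat the two implications separately. Throughout, write $p=P(Y=1)$ and let $f_k$, $F_k$ denote the density and CDF of $\Psi(X)\mid Y=k$, which (as elsewhere in the paper) we take to be continuous and strictly positive, so each $F_k$ is $C^1$ with $F_k'=f_k$ and strictly increasing; put $\Lambda=f_1/f_0$. By Bayes' theorem, for $a=\tau_{i-1}<b=\tau_i$,
\[
R(\beta,\tau)_i=\left(1+\frac{1-p}{p}\,\frac{1}{g_a(b)}\right)^{-1},\qquad g_a(b):=\frac{F_1(b)-F_1(a)}{F_0(b)-F_0(a)},
\]
and since $x\mapsto(1+\tfrac{1-p}{p}x^{-1})^{-1}$ is strictly increasing, $R(\beta,\tau)_i$ is strictly increasing in $\tau_i$ if and only if $g_a$ is strictly increasing on $(a,\infty)$. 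Recalling that SMLRP, in the sign convention of this paper (higher score, higher risk), means $\Lambda$ strictly increasing, it therefore suffices to show: $\Lambda$ strictly increasing $\iff$ $g_a$ strictly increasing on $(a,\infty)$ for every $a$.

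For the forward implication I would differentiate $g_a$ in $b$, which is legitimate because $F_k\in C^1$ and the denominator $F_0(b)-F_0(a)$ is positive for $b>a$. Using $f_1=f_0\Lambda$ twice, the numerator of $g_a'(b)$ collapses to $f_0(b)\int_a^b f_0(t)\bigl(\Lambda(b)-\Lambda(t)\bigr)\,dt$. If $\Lambda$ is strictly increasing then $\Lambda(b)-\Lambda(t)>0$ for every $t\in[a,b)$, and $f_0>0$, so the integral — and hence $g_a'(b)$ — is strictly positive.

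For the converse the key step is to notice that monotonicity in $b$ already forces monotonicity in $a$ as well. Indeed, additivity of the CDF increments over the adjacent sub-intervals $(a,a']$ and $(a',b]$ gives, for $a<a'<b$, the convex-combination identity $g_a(b)=\lambda\,g_a(a')+(1-\lambda)\,g_{a'}(b)$ with $\lambda=\frac{F_0(a')-F_0(a)}{F_0(b)-F_0(a)}\in(0,1)$; since the hypothesis yields $g_a(a')<g_a(b)$, this forces $g_{a'}(b)>g_a(b)$, i.e. $g_a(b)$ is strictly increasing in $a$ on $(-\infty,b)$ too. Now fix $x_1<x_2$. Because $F_k\in C^1$ with $F_k'=f_k$ and $f_0>0$, the quotient limits (L'H\^opital) give $\lim_{b\downarrow x_1}g_{x_1}(b)=\Lambda(x_1)$ and $\lim_{a\uparrow x_2}g_a(x_2)=\Lambda(x_2)$. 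Strict monotonicity in the second argument then gives $g_{x_1}(x_2)>\Lambda(x_1)$, and strict monotonicity in the first argument gives $g_{x_1}(x_2)<\Lambda(x_2)$; hence $\Lambda(x_1)<\Lambda(x_2)$, which is SMLRP.

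The main obstacle is the converse. A direct attempt — trying to compare $g_{x_1}(x_2)$ with $\Lambda(x_1)$ and $\Lambda(x_2)$ by hand, or setting up an ODE-type comparison for the $f_0$-weighted average of $\Lambda$ over a moving interval — runs in circles, because ``$g_{x_1}$ increasing on $(x_1,x_2]$'' is exactly the conclusion one is after. The point is that monotonicity of $R(\beta,\tau)_i$ in $\tau_i$ secretly encodes, through the additivity of CDF increments, monotonicity in $\tau_{i-1}$ as well, and the two sandwich $\Lambda$ between incremental ratios, $\Lambda(x_1)<g_{x_1}(x_2)<\Lambda(x_2)$ — which is also precisely the bridge between the pointwise notion of monotonicity (SMLRP) and the interval notion discussed just before the theorem. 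The forward direction and the limit computations are routine; the only other things needing care are the strictness of all inequalities and pinning down the direction implicit in the definition of SMLRP.
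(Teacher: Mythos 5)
Your proof is correct and takes essentially the same route as the paper's: both reduce, via Bayes' theorem, to the equivalence between SMLRP and strict monotonicity of the incremental CDF ratio $\gamma_{\Psi}(c,x)=\frac{F_{1}(x)-F_{1}(c)}{F_{0}(x)-F_{0}(c)}$ in $x$, prove the forward direction by the same computation (your positive-derivative calculation is exactly the paper's integration of $f_{1}(x_{1})f_{0}(x_{0})>f_{1}(x_{0})f_{0}(x_{1})$ over $x_{0}\in[c,x_{1}]$), and prove the converse by sandwiching $\Lambda_{\Psi}(x_{1})<\gamma_{\Psi}(x_{1},x_{2})<\Lambda_{\Psi}(x_{2})$ through shrinking-interval limits. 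Your convex-combination identity giving monotonicity in the left endpoint is only a tidier way of carrying out the paper's terse ``take $c\rightarrow x_{1}$'' step, so there is no substantive difference in approach.
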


\begin{proof} Using our previous definition of $R(\beta, \tau)_i = P(Y=1 \mid \Psi(X) \in (\tau_{i-1}, \tau_{i}])$ and Bayes theorem we can represent:
\begin{equation} \label{eq_R_gamma_equiv}
R(\Psi, \tau)_i = \frac{p (F_{1}(\tau_{i}) - F_{1}(\tau_{i-1}))}{p (F_{1}(\tau_{i}) - F_{1}(\tau_{i-1})) + (1-p) (F_{0}(\tau_{i}) - F_{0}(\tau_{i-1}))} = \frac{1}{1 + \frac{1-p}{p} \frac{1}{\gamma_{\Psi}(\tau_{i-1},\tau_{i})}}
\end{equation}
where
\begin{equation} \label{eq_gamma_def}
\gamma_{\Psi}(\tau_{i-1},\tau_{i}) = \frac{F_{1}(\tau_{i}) - F_{1}(\tau_{i-1})}{F_{0}(\tau_{i}) - F_{0}(\tau_{i-1})}
\end{equation}
and $F_{k} (x) = F_{\Psi(X) \mid Y=k} (x)$ are the cumulative distribution functions (CDF) of the conditional distributions. The strict monotonicity of $R_{i}(\Psi, \tau)$ in $\tau_{i}$ for any $\tau_{i-1} < \tau_{i}$ is therefore equivalent to the strict monotonicity of $\gamma(c,x)$ in $x$ for any $c, \: x > c$.

\par In addition, for two positive, finite, strictly increasing and once differentiable functions $g,h$ the following equivalence holds:
\begin{equation}
\frac{g(x)}{h(x)} \text{ is strictly increasing} \: \Leftrightarrow \: \frac{g'(x) h(x) - g(x) h'(x)}{h^{2} (x)} > 0 \: \Leftrightarrow \: \frac{g(x)}{h(x)} < \frac{g'(x)}{h'(x)}
\label{eq_ratio_mono}
\end{equation}

Since $F_{0}, F_{1}$ meet these requirements, then by (\ref{eq_R_gamma_equiv}) the strict monotonicity of both $\gamma_{\Psi}(c,x)$ and $R(\Psi, \tau)_i$ is equivalent to following condition:
\begin{equation} \label{eq_lambda_gamma_ineq}
\gamma_{\Psi}(c,x) = \frac{F_{1}(x) - F_{1}(c)}{F_{0}(x) - F_{0}(c)} < \frac{\frac{d}{dx}(F_{1}(x) - F_{1}(c))}{\frac{d}{dx}(F_{0}(x) - F_{0}(c))} = \frac{f_{1}(x)}{f_{0}(x)} = \Lambda_{\Psi}(x) \quad \forall c<x
\end{equation}

It is therefore sufficient to show that under the above assumptions of continuity and finiteness that the following equivalence holds:
\begin{equation}
SMLRP \: \Leftrightarrow \: \gamma_{\Psi}(c,x) < \Lambda_{\Psi}(x) \quad \forall c,c<x
\end{equation}

\par \emph{Step 1:} $SMLRP \: \Rightarrow \: \gamma_{\Psi}(c,x) < \Lambda_{\Psi}(x) \quad \forall c<x$ \\
Under SMLRP:
\begin{equation} \label{MLRP_mono_ratio}
\forall x_{1} > x_{0} \quad \frac{f_{1} (x_{1})}{f_{0} (x_{1})} > \frac{f_{1} (x_{0})}{f_{0} (x_{0})} \: \Leftrightarrow \: f_{1} (x_{1}) f_{0} (x_{0}) > f_{1} (x_{0}) f_{0} (x_{1})
\end{equation}
The equivalence holds since $f_{0}, f_{1}$ are strictly positive, continuous and finite. Integrating on $x_{0}$ over the interval $[c,x_{1}]$ we have:
\begin{equation} \label{eq_MLRP_monotone_Psi} \begin{split}
& \int_{c}^{x_{1}} f_{1} (x_{1}) f_{0} (x_{0}) d x_{0} > \int_{c}^{x_{1}} f_{1} (x_{0}) f_{0} (x_{1}) d x_{0} \\
\Leftrightarrow \quad & f_{1} (x_{1}) (F_{0} (x_{1}) - F_{0} (c)) >
f_{0} (x_{1}) (F_{1} (x_{1}) - F_{1} (c)) \: \Leftrightarrow \:
\gamma_{\Psi}(c,x_{1}) < \Lambda_{\Psi}(x_{1}) \end{split}
\end{equation}
and this holds for any $x_{1} \in \mathbb{R}$. In
addition setting $c = -\infty$ when integrating maintains the strict
inequalities of (\ref{eq_MLRP_monotone_Psi}), and therefore SMLRP
also ensures strict monotonicity of $\gamma_{\Psi}(-\infty, x) =
F_{1}(x) / F_{0}(x)$ and the equivalent monotonicity of $P(Y=1 \mid
\Psi(X) < x)$ in $x$.

\par \emph{Step 2:} $ \forall c,c<x \quad \gamma_{\Psi}(c,x) < \Lambda_{\Psi}(x) \: \Rightarrow \: SMLRP$

\par Under our assumptions:
\begin{equation}
\forall x_{1} > x_{0} > c \quad \gamma(c,x_{1}) = \frac{\int_{c}^{x_{1}} f_{1} (x) dx}{\int_{c}^{x_{1}} f_{0} (x) dx} > \frac{\int_{c}^{x_{0}} f_{1} (x) dx}{\int_{c}^{x_{0}} f_{0} (x) dx} = \gamma(c,x_{0})
\end{equation}
Assuming all functions are continuous and finite we can take $c \rightarrow x_{0}$:
\begin{equation} \label{eq_gamma_ineq_rhs}
\gamma(x_{0},x_{1}) = \frac{\int_{x_{0}}^{x_{1}} f_{1} (x) dx}{\int_{x_{0}}^{x_{1}} f_{0} (x) dx} > \frac{f_{1} (x_{0})}{f_{0} (x_{0})}
\end{equation}
This inequality is strict since for any $x_{1} > x_{0}$ there exists
$\varepsilon = \frac{x_{1} - x_{0}}{2}>0$ such that:
\begin{equation}
\gamma(x_{0},x_{1}) = \frac{\int_{x_{0}}^{x_{1}} f_{1} (x) dx}{\int_{x_{0}}^{x_{1}} f_{0} (x) dx} >  \frac{\int_{x_{0}}^{x_{0}+\varepsilon} f_{1} (x) dx}{\int_{x_{0}}^{x_{0}+\varepsilon} f_{0} (x) dx} = \gamma(x_{0},x_{0}+\varepsilon) \geq \frac{f_{1} (x_{0})}{f_{0} (x_{0})}
\end{equation}
On the other hand taking $c \rightarrow x_{1}$ (using the same
considerations and utilizing the fact that for $b>a$, $\int_{b}^{a}
f(x)dx = - \int_{a}^{b} f(x)dx$) and combining with
(\ref{eq_gamma_ineq_rhs}) we have:
\begin{equation}  \label{eq_gamma_ineq_lhs}
\frac{f_{1} (x_{1})}{f_{0} (x_{1})} > \frac{\int_{x_{0}}^{x_{1}} f_{1} (x) dx}{\int_{x_{0}}^{x_{1}} f_{0} (x) dx} = \gamma(x_{0},x_{1}) > \frac{f_{1} (x_{0})}{f_{0} (x_{0})}
\end{equation} \end{proof}

\section{Appendix: Uniqueness of $\tau$ under the Strict Monotone Likelihood Ratio Property \label{appendix_unique_tau}}

\par The risk estimation methods mentioned in this paper typically deal only with the optimal estimation of $\Psi$ (and the breakpoints $\tau$ are defined post-hoc). Introducing the set of breakpoints $\tau$ as an integral part of the definition of IRD increases in the number of parameters that must be estimated simultaneously, resulting in a more complicated parameter space (for example we require $\tau_{i-1} < \tau_{i}$). Although the increase in the number of estimated parameters should not be significant (in practical scenarios we expect $T \leq 10$) the result nonetheless would be longer running times for the optimization algorithms. Before we proceed any further it would be useful to identify sufficient conditions for the uniqueness of $\tau$ for a given $\Psi$:

\begin{lemma} \label{lemma_tau_unique}
If for a given $\Psi$ the likelihood ratio $\Lambda_{\Psi}(x) = \frac{f_{\Psi(X) \mid Y=1}(x)}{f_{\Psi(X) \mid Y=0}(x)}$ satisfies the strict monotone likelihood ratio property (SMLRP), then if there exists $\tau_{\Psi}$ such that $\text{IRD}_{r}(\Psi, \tau_{\Psi}) =0$ it is unique.
\end{lemma}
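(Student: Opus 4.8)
The plan is to exploit the fact, already established, that under SMLRP each coordinate $R_i(\Psi,\tau)$ is strictly monotone (hence injective) in its own right endpoint $\tau_i$, and to propagate this injectivity from $i=1$ upward through the vector $\tau$. First I would unfold the hypothesis: since $\text{IRD}_r(\Psi,\tau)=\Vert R(\Psi,\tau)-r\Vert$, the equality $\text{IRD}_r(\Psi,\tau_\Psi)=0$ is equivalent to the system $R_i(\Psi,\tau_\Psi)=r_i$ for all $i=1,\dots,T$. The structural observation is that, by (\ref{eq_R_def}), $R_i(\Psi,\tau)=P(Y=1\mid\Psi(X)\in(\tau_{i-1},\tau_i])$ depends on $\tau$ only through the pair $(\tau_{i-1},\tau_i)$, with the endpoints $\tau_0=-\infty$ and $\tau_T=\infty$ held fixed; and that, because the conditional densities $f_{\Psi(X)\mid Y=k}$ are strictly positive, every nondegenerate interval $(\tau_{i-1},\tau_i]$ (recall $\tau_{i-1}<\tau_i$ is required of a partition) has positive probability under both conditional laws, so every denominator occurring below is strictly positive and all quantities are well defined.

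Next I would record the monotonicity in the precise form needed: for each fixed $\tau_{i-1}$ the map $\tau_i\mapsto R_i(\Psi,\tau)$ is strictly increasing on $(\tau_{i-1},\infty)$, hence injective. For $i\ge 2$ this is exactly Theorem~\ref{th_SMLRP_mono_R_equiv}. For $i=1$, where $\tau_{i-1}=-\infty$, it is the remark at the end of Step~1 of Appendix~\ref{appendix_MLRP_cdf_ratio_mono}: integrating the likelihood-ratio inequality from $c=-\infty$ yields strict monotonicity of $\gamma_\Psi(-\infty,x)=F_1(x)/F_0(x)$, and hence of $R_1(\Psi,\tau)=P(Y=1\mid\Psi(X)\le\tau_1)$, in $\tau_1$.

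The uniqueness then follows by induction on $i$. Suppose $\tau$ and $\tilde\tau$ both satisfy $\text{IRD}_r(\Psi,\cdot)=0$. For $i=1$ we have $R_1(\Psi,\tau)=r_1=R_1(\Psi,\tilde\tau)$, so injectivity of $R_1$ in its single free coordinate gives $\tau_1=\tilde\tau_1$. Assuming $\tau_j=\tilde\tau_j$ for all $j\le i-1$, the maps $R_i(\Psi,\tau)$ and $R_i(\Psi,\tilde\tau)$ are evaluated at the common fixed left endpoint $\tau_{i-1}=\tilde\tau_{i-1}$ and both equal $r_i$, so strict monotonicity of $\tau_i\mapsto R_i$ forces $\tau_i=\tilde\tau_i$. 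Running $i$ from $1$ to $T-1$ gives $\tau=\tilde\tau$.

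I do not expect a serious obstacle: the argument is essentially a sequential injectivity argument built on results proved earlier. The two points that need care are bookkeeping ones — ensuring the $i=1$ case is covered by the $c=-\infty$ version of the monotonicity in Appendix~\ref{appendix_MLRP_cdf_ratio_mono} rather than by the literal statement of Theorem~\ref{th_SMLRP_mono_R_equiv}, and noting that $\tau$ being a genuine partition keeps every interval nondegenerate so that those monotonicity statements apply. It is also worth remarking that only the first $T-1$ equations $R_i=r_i$ enter the proof; the last one, $R_T=r_T$, is the feasibility constraint discussed in Section~\ref{section_lower_bounds_IRD} and is irrelevant to uniqueness.
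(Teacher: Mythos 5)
Your argument is correct and follows essentially the same route as the paper's own proof: strict monotonicity of $R_i$ in $\tau_i$ (Theorem \ref{th_SMLRP_mono_R_equiv}) plus induction on the coordinates of $\tau$, fixing the common left endpoint at each step. Your extra care on the $i=1$ case via the $c=-\infty$ remark in Appendix \ref{appendix_MLRP_cdf_ratio_mono} only makes explicit what the paper leaves implicit, so the two proofs are essentially identical.
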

\begin{proof} If $\Lambda_{\Psi}(x)$ satisfies SMLRP, then by theorem \ref{th_SMLRP_mono_R_equiv} (appendix \ref{appendix_MLRP_cdf_ratio_mono}) $R$ is strictly monotone is $\tau_{i}$. The rest is by induction: strict monotonicity of $R_1$ means that if there exists $\tau_{1}$ which satisfies $R(\Psi, \tau)_1 = r_1$, then it is unique. Fixing $\tau_{i-1}$, if (\ref{eq_lower_bound_on_r}) holds (meaning that $\tau_{i}$ is "feasible"), then again by strict monotonicity, if there exists $\tau_{i}$ that satisfies  $R(\Psi, \tau)_i = r_{i}$, then it is unique.
\par Therefore if (\ref{eq_lower_bound_on_r}) holds for all $i$ then only a single $\tau$ satisfies $\text{IRD}_{r}(\Psi, \tau) = 0$.\end{proof}

\begin{corollary} Under SMLRP we can denote $\tau = \tau(\Psi)$ and define IRD using $\Psi$ alone:
\begin{equation} \label{eq_defs_Psi_only}
R_{i}(\Psi) = P(Y=1 \mid \Psi(X) \in (\tau_{i-1}(\Psi), \tau_{i}(\Psi)]), \quad \text{IRD}_{r}(\Psi) = \Vert R(\Psi) - r \Vert
\end{equation} \end{corollary}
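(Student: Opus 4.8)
The plan is to obtain the Corollary as an immediate consequence of Lemma~\ref{lemma_tau_unique} (uniqueness) together with Theorem~\ref{th_SMLRP_mono_R_equiv} (strict monotonicity of $R_i$ in $\tau_i$), so that the substantive content has already been supplied. First I would fix a risk predictor $\Psi$ that admits at least one partition $\tau$ with $\text{IRD}_r(\Psi,\tau)=0$. By Lemma~\ref{lemma_tau_unique}, under SMLRP that partition is unique, so the assignment $\Psi\mapsto\tau$ is single-valued on the collection of such predictors and we may write $\tau(\Psi)$ for it without ambiguity. Given $\tau(\Psi)$, the two displayed identities in (\ref{eq_defs_Psi_only}) are then nothing more than the substitution $\tau=\tau(\Psi)$ into the definitions (\ref{eq_R_def}) and (\ref{eq_IRD_def}): $R_i(\Psi):=R_i(\Psi,\tau(\Psi))$ and $\text{IRD}_r(\Psi):=\Vert R(\Psi)-r\Vert$ are well posed precisely because $\tau(\Psi)$ is.

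The one detail I would make explicit is the domain on which $\tau(\Psi)$ lives, since it is a partial function. The argument behind Lemma~\ref{lemma_tau_unique} also yields the construction: with $\tau_0=-\infty$ fixed, Theorem~\ref{th_SMLRP_mono_R_equiv} and continuity of the conditional CDFs $F_0,F_1$ make $t\mapsto P(Y=1\mid\Psi(X)\le t)$ continuous and strictly increasing, so $R_1(\Psi,\tau)=r_1$ has a unique solution $\tau_1(\Psi)$ whenever $r_1$ lies in its range; inductively, once $\tau_{i-1}(\Psi)$ is determined, $t\mapsto R_i$ with lower endpoint $\tau_{i-1}(\Psi)$ is continuous and strictly increasing on $(\tau_{i-1}(\Psi),\infty)$ with infimum the right-hand limit computed in (\ref{eq_lim_of_R}), so a necessarily unique $\tau_i(\Psi)$ solving $R_i=r_i$ exists exactly when the feasibility inequality (\ref{eq_lower_bound_on_r}) holds and $r_i$ does not exceed the supremum. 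Hence $\tau(\Psi)$, and with it the $\Psi$-only quantities of (\ref{eq_defs_Psi_only}), are defined on precisely the set of predictors satisfying all $T-1$ feasibility inequalities — in particular on every $\Psi$ admitting a zero-IRD partition.

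I do not anticipate a genuine obstacle here: this is a bookkeeping corollary whose mathematical weight is carried entirely by Lemma~\ref{lemma_tau_unique}. The only thing worth guarding against is over-claiming — $\tau(\Psi)$ need not exist for every $\Psi$, and, unlike the uniqueness half, the existence half of the construction genuinely uses continuity of $F_0,F_1$ together with the feasibility condition (\ref{eq_lower_bound_on_r}) — so I would state the domain explicitly rather than leave it implicit.
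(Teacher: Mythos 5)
Your proposal is correct and follows essentially the same route as the paper: the corollary is a notational consequence of Lemma~\ref{lemma_tau_unique}, whose uniqueness statement makes $\Psi \mapsto \tau(\Psi)$ single-valued wherever a zero-IRD partition exists, after which (\ref{eq_defs_Psi_only}) is just substitution into (\ref{eq_R_def}) and (\ref{eq_IRD_def}). Your explicit remark that $\tau(\Psi)$ is only a partial function, defined on the predictors admitting a feasible partition, is a sensible clarification the paper leaves implicit (only note that in your existence discussion the condition should be that $r_i$ is \emph{strictly} below the supremum, since by strict monotonicity the supremum itself is not attained).
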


\end{document}